\documentclass[12pt,reqno]{amsart}
\usepackage{amsmath, amssymb,fullpage}
\usepackage{graphicx, color}
\usepackage{enumitem,colonequals}
\usepackage[colorlinks=true,linkcolor=blue,citecolor=blue,urlcolor=blue]{hyperref}

\newtheorem{theorem}{Theorem}[section]
\newtheorem{lemma}[theorem]{Lemma}

\newtheorem{definition}[theorem]{Definition}
\newtheorem{remark}[theorem]{Remark}

\newcommand{\R}{\mathbb{R}}
\newcommand{\Z}{\mathbb{Z}}
\renewcommand{\P}{\mathbb{P}}

\newcommand{\proj}{\Pi}
\newcommand{\eps}{\varepsilon}
\renewcommand{\epsilon}{\varepsilon}

\title{On the Convergence of Adam, Revisited}
\author{Steven Heilman}
\author{Sampad Mohanty}
\date{\today}
\thanks{
Email: stevenmheilman@gmail.com, sbmohant@usc.edu\\
S.H. is supported by NSF Grant CCF AF 2448108.  \\
2020 Mathematics Subject Classification: 68W27, 65K10, 68Q32\\
Keywords: Adam, online optimization, regret\\
Department of Mathematics, University of Southern California, Los Angeles, CA 90089}

\begin{document}

\begin{abstract}
We show that projected Adam for online optimization with arbitrary moment decay parameters $\beta_1,\beta_2\in[0,1)$ can have average regret bounded away from zero.  A similar result of Reddi-Kale-Kumar from 2018 required $\beta_1<\sqrt{\beta_2}$.  Similar to their result, we use a three-periodic sequence of linear functions on $[-1,1]$ with slopes $c,-1,-1$, though we use $c$ slightly larger than $2$.  This nonzero average regret result extends to Adam variants such as AdamW, RMSProp, NAdam, Adan, AdaMax, Muon, and to an i.i.d. variant of the three-periodic sequence of slopes for Adam.
\end{abstract}

\maketitle

\section{Introduction}

In online minimization on $[-1,1]$, we are presented with a sequence of functions $f_1, f_2,\ldots$ where $f_{t}\colon[-1,1]\to\R$ for all $t\geq 1$.  At time $t\geq1$, we know $f_1(x_1),\ldots,f_{t-1}(x_{t-1})$ and $f_1'(x_1),\ldots,f_{t-1}'(x_{t-1})$, and we produce $x_t\in[-1,1]$.  For a fixed time horizon $T\geq1$, the goal is to minimize the regret $R_T$ at time $T$ against the best fixed comparator in $[-1,1]$, where
\begin{equation}\label{regretdef}
    R_T\colonequals\sum_{t=1}^T f_t(x_t)-\min_{x\in[-1,1]}\sum_{t=1}^T f_t(x).
\end{equation}

In contemporary applications, $f_t$ often depends on the $t$-th portion (or batch) of a large dataset.  The most popular optimization method for these applications is Adam \cite{kingma2015adam} (Adaptive Moment Estimation).  Adam and its variants perform online optimization to train neural networks, transformers, large language models, etc.  With nearly 250,000 citations, \cite{kingma2015adam} is currently one of the all time most highly cited scientific papers.

Under additional assumptions such as varying its parameters, Adam is known to converge, in the sense that $R_T /T\to0$ as $T\to\infty$ \cite{reddi2018adam}.  However, it is also known that Adam might not converge, i.e. there are examples of sequences of fairly reasonable functions $f_1,f_2,\ldots$ where projected Adam produces $x_1,x_2,\ldots$ with $R_T/T$ not converging to $0$ as $T\to\infty$.  However, these results only apply with restrictions on Adam's parameters \cite{reddi2018adam}.  In order to understand these parameters, let us define projected Adam.  

\begin{definition}[\textbf{Adam Optimization Method {\cite{kingma2015adam}}}]\label{adamopt}
Fix
\[
    b\colonequals\beta_1\in[0,1),\qquad q\colonequals\beta_2\in[0,1),\qquad \eps\ge0,
    \qquad \alpha_t>0,\quad\forall\,t\geq1.
\]
Let $x_1\in[-1,1]$ be arbitrary.  Define $x_{2},x_{3},\ldots\in[-1,1]$ as follows.  
\begin{equation}\label{adameq}
    m_t\colonequals bm_{t-1}+(1-b)g_t,
    \qquad
    v_t\colonequals qv_{t-1}+(1-q)g_t^2,\qquad
    g_t\colonequals f_t'(x_t),\qquad\forall\,t\geq1
\end{equation}
with the standard initialization $m_0=v_0=0$.  The projected update with step sizes $\alpha_{t}>0$ is
\begin{equation}\label{adamdef}
    x_{t+1}
    \colonequals\proj_{[-1,1]}\left(x_t-\alpha_t h_t\right),
    \qquad
    h_t\colonequals\frac{m_t}{\sqrt{v_t}+\eps},\qquad\forall\,t\geq1,
\end{equation}
where $\Pi_{[-1,1]}(x)\colonequals -1_{\{x<-1\}}+x1_{\{-1\leq x\leq 1\}}+1_{\{x>1\}}$ is projection of $x\in\R$ to the nearest element of $[-1,1]$.  Here $\alpha_t$ is called the learning rate or step size. 
For example, one could use $\alpha_{t}\colonequals\alpha/\sqrt{t}$ for some $\alpha>0$.  Also, if $\eps=0$, then $h_t$ is only defined when $v_t\neq0$.
\end{definition}

Some authors may refer to Adam as the above optimization method, but with no projection term $\proj_{[-1,1]}$ appearing  \eqref{adamdef}.  We will not do that.  Unless otherwise stated, we only refer to Adam as the method defined in Definition \ref{adamopt}.

\begin{remark}
We define RMSProp to be the Adam optimization method with $\beta_1=0$.  Other implementations called RMSProp may include momentum, centering, different epsilon
placement, or bias corrections; those variants require separate notation, although the same
short-memory denominator mechanism often persists.
\end{remark}
\begin{remark}
We briefly contrast Adam with  other optimization methods:
\begin{itemize}
\item $x_{t+1}\colonequals x_t - \alpha_t g_t$ (Gradient Descent) 
\item
$m_{t}\colonequals b m_{t-1}+g_t$,
$x_{t+1}\colonequals x_t - \alpha_t m_t$  (Heavy Ball)
\item $m_{t}\colonequals b m_{t-1}+f_t'(x_t - \alpha_t m_{t-1})$,
$x_{t+1}\colonequals x_t - \alpha_t m_t$ (Nesterov Accelerated Gradient)
\item $m_t\colonequals b m_{t-1}+(1-b)g_t$, $v_t\colonequals \max(q v_{t-1}, |g_t|)$, $h_t\colonequals\frac{m_t}{v_t + \epsilon}$, $x_{t+1}$ as in \eqref{adamdef} (AdaMax)
\item Same as Adam with $x_{t+1}
    \colonequals\proj_{[-1,1]}\left((1-\lambda\alpha_t)x_t-\alpha_t h_t\right)$ for some $\lambda\geq0$ (AdamW)
\item Same as Adam, but with $h_t\colonequals\frac{\beta_1 m_t+(1-\beta_1)g_t}{\sqrt{v_t}+\epsilon}$ (NAdam)
\item Same as Adam, with $q$ changing over time (NosAdam)
\end{itemize}
\end{remark}
%

%
%
%

The main parameters that can adjust the behavior of Adam are $\beta_1$ and $\beta_2$.  From the recursion \eqref{adameq}, we see that $\beta_1$ quantifies the amount of exponentially decaying ``memory'' of past derivatives of $f_t$ (where $b=\beta_1$ close to $1$ is a ``larger'' amount of such memory), since $m_t$ is approximately a function of $1/\log(1/b)$ previous time steps.  Likewise, $q=\beta_2$ quantifies the amount of ``memory'' of past squared gradients of $f_t$.

Here are some cited examples of Adam used to train large language models, together with their parameter descriptions.
\begin{itemize}
\item BERT was trained with Adam ``with learning rate of $10^{-4}$, $\beta_1 = 0.9$, $\beta_2 = 0.999$, L2 weight decay of 0.01, learning rate warmup over the first 10,000 steps, and linear decay of the learning rate.'' \cite{devlin19}.
\item GPT-3 was trained with Adam ``with $\beta_1 = 0.9$, $\beta_2 = 0.95$, and $\epsilon=10^{-8}$, we clip the global
norm of the gradient at 1.0, and we use cosine decay for learning rate down to 10\% of its value'' \cite{brown20}.
\item Llama 2 was trained ``using the AdamW optimizer \cite{los19}, with $\beta_1=0.9$, $\beta_2=0.95$, $\epsilon=10^{-5}$. We use a cosine learning rate schedule, with warmup of 2000 steps, and decay final learning rate down to 10\% of the peak learning rate. We use a weight decay of 0.1 and gradient clipping of 1.0.'' \cite{touvron23}.
\item DeepSeek-V3 ``employ[s] the AdamW optimizer \cite{los19} with hyper-parameters set to $\beta_1=0.9$, $\beta_2=0.95$ and weight decay $0.1$.'' \cite{deep25}
\end{itemize}

Despite the empirical success of Adam, it is known that it might not converge to its optimum.  The main result of Reddi, Kale, and Kumar~\cite{reddi2018adam} showed that Adam might not converge to its optimum for a sequence of linear functions on $[-1,1]$.

\begin{theorem}[{\cite[Theorem 2]{reddi2018adam}}]\label{rthm2}
Let $\beta_1<\sqrt{\beta_2}$, let $\alpha>0$ and let $\alpha_t\colonequals\alpha/\sqrt{t}$, for all $t\geq1$.  Then there exists a sequence of functions $f_1,f_2,\ldots\colon[-1,1]\to\R$ such that the Adam optimization method has regret satisfying: $R_T /T$ does not converge to zero as $T\to\infty$.
\end{theorem}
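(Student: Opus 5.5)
We follow the strategy of Reddi--Kale--Kumar. The plan is to take a periodic sequence of linear functions whose optimum is $x=-1$, and show that Adam's iterates stay near $x=+1$ for most of the time. Concretely, fix an integer period $C$ (to be chosen large in terms of $\beta_1,\beta_2$) and let
\[
f_t(x)=Cx \text{ if } t\equiv 1 \pmod{C}, \qquad f_t(x)=-x \text{ otherwise}.
\]
Over each period the sum of the slopes is $C-(C-1)=1>0$, so the best fixed comparator is $x=-1$, and $\sum_{t\le T} f_t(x)=\tfrac{T}{C}x+O(C)$. Consequently, if $x_t\ge 0$ for all large $t$, each period contributes regret at least about $1/C\cdot$(const) relative to $-1$. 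More precisely, the per-period regret is $\sum f_t(x_t)+1\ge$ a positive constant once the iterates are pinned near $1$. This gives $R_T/T\ge c>0$.

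The core step is to show the iterates drift upward. Since $\alpha_t\to 0$, within a single period the iterates move only $O(\alpha C/\sqrt t)$, so it suffices to show that the net Adam displacement over one period starting at a period boundary is nonpositive in the "downward" direction. That is, we want $\sum_{i=0}^{C-1}\alpha_{t+i}h_{t+i}\le 0$ whenever $x_t$ is interior, and each period where the iterate hits $+1$ resets it there. Because all gradients are independent of $x$ (linear functions), $m_t,v_t$ are deterministic. After the transient $m_0=v_0=0$ has decayed geometrically, they are periodic up to errors of order $\beta_1^{t},\beta_2^{t}$, so the $h_t$ are asymptotically periodic with explicit closed forms:
\[
m_{kC+1+i}\approx \beta_1^{i}(1-\beta_1)\frac{C+1}{1-\beta_1^C}-1+\text{(small)},\qquad v_{kC+1+i}\approx \beta_2^{i}(1-\beta_2)\frac{C^2-1}{1-\beta_2^C}+1.
\]
Set $\alpha_{t+i}=\alpha/\sqrt{t+i}=\frac{\alpha}{\sqrt t}(1+O(C/t))$. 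The sign of the period displacement is then governed by $S=\sum_{i=0}^{C-1} h^{\infty}_{i}$ computed from the periodic limits.

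The key inequality, and the main obstacle, is showing $S<0$ for large $C$ using $\beta_1<\sqrt{\beta_2}$. Heuristically, the large gradient $C$ enters $m$ with weight $(1-\beta_1)C\beta_1^i$, but it is divided by $\sqrt{v}\gtrsim\sqrt{(1-\beta_2)}\,C\beta_2^{i/2}$. Hence the positive contribution at lag $i$ is at most about $\frac{1-\beta_1}{\sqrt{1-\beta_2}}(\beta_1/\sqrt{\beta_2})^i$. Because $\beta_1/\sqrt{\beta_2}<1$, this sums to a constant $K$ independent of $C$. Meanwhile, for $i$ beyond $i_0\approx \log C/\log(1/\beta_2)$, the denominator is $\approx 1$ and $m\approx -1$, so $h\approx -1$ for roughly $C-O(\log C)$ steps. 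Thus $S\le K+O(\log C)-(C-O(\log C))<0$ for $C$ large. I would bound the intermediate lags crudely by $|h|\le 1/\sqrt{1-\beta_2}$-type bounds, which is where care is needed. Adding the geometric transient errors and the $O(C/t)$ learning-rate variation, we get that for all large periods, $x$ at the start of the next period is at least $\min(1,x+c'\alpha C/\sqrt{t})$. Since $\sum 1/\sqrt t=\infty$, the iterate reaches $1$ and then returns to within $O(\alpha C/\sqrt t)$ of $1$ each period. With $x_t\to 1$ uniformly along periods, the per-period loss is $\approx 1$ versus $\approx -1$ for the comparator, so $R_T/T\ge 2/C - o(1)$, which completes the argument.
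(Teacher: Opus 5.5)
Your outline is correct, but it follows a different route from the paper. The paper does not reprove this cited result directly. It obtains it as the special case $\alpha_t=\alpha/\sqrt t$ of Theorem~\ref{thm:all-betas}, whose example has period $3$ and slopes $2+\delta,-1,-1$.

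\textbf{How the two drift arguments differ.} In the paper, the drift is negative for \emph{all} $\beta_1,\beta_2\in[0,1)$ and $\eps\ge0$, and this comes from an exact identity rather than an asymptotic count. At $a=2$ the steady-state first moments cancel, $M_1(2)+M_2(2)+M_3(2)=0$. The single positive numerator $M_1(2)$ sits over the strictly largest denominator, since $V_1(2)>V_2(2),V_3(2)$. Continuity in $a$ then gives $S(2+\delta)<0$.

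You instead use the Reddi--Kale--Kumar period-$C$ example and win by size:
\begin{itemize}
\item The hypothesis $\beta_1<\sqrt{\beta_2}$ is used exactly to sum $(\beta_1/\sqrt{\beta_2})^i$. This makes the total positive part of the period sum a constant $K$ independent of $C$.
\item The $C-O(\log C)$ late lags each contribute at most a fixed negative amount. When $\eps>0$ this amount is about $-1/(1+\eps)$ rather than $-1$, so $C$ must depend on $\eps$ too.
\item You need no bound on the intermediate lags: their positive parts are already inside $K$, and their negative parts only help.
\end{itemize}
Your route gives an explicit, robust margin ($S\lesssim -cC$, limiting average regret $2/C$). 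The cost is that it needs the hypothesis, and $C$ blows up as $\beta_1\to\sqrt{\beta_2}$ or $\beta_2\to1$. The paper's route removes the hypothesis and keeps $1\le|f_t'|\le3$, at the price of a $\delta$ obtained only by continuity.

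\textbf{A step to tighten.} The passage to $z_{k+1}\ge\min(1,z_k+c'\alpha C/\sqrt t)$ needs an argument. A positive unprojected period sum does not imply it by itself: an upward step clipped at $+1$ and then followed by downward steps loses ground. The inequality does hold here, because $m_i^\infty$ is nonincreasing in $i$. In a period starting at $t\equiv1\pmod C$, the increments $-\alpha_{t+i}h_{t+i}$ are therefore, for large $t$, a block of nonpositive terms followed by a block of nonnegative ones. A possible zero of $m^\infty_i$ sits at the junction of the two blocks, so it is harmless. The proof of Lemma~\ref{lem:projection} extends by induction to exactly this pattern. During the nonpositive block the iterate stays at or above $x$ plus the partial sum of increments, and \eqref{p2} handles the nonnegative block.

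\textbf{A false preliminary remark.} Your claim that $x_t\ge0$ already yields positive per-period regret is false. Taking $x=0$ at the slope-$C$ step and $x=1$ elsewhere gives per-period regret $2-C$. Your final argument only uses $x_t\to1$, so nothing depends on this remark.
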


The example used was $f_t(x)=-x$ for all $t\geq1$ except $t\text{ mod }c=1$, in which case $f_t(x)=cx$, for all $x\in[-1,1]$.  That is, the slope of $f_t$ is $c$-periodic, where $c$ is chosen to be a sufficiently large number, as a function of $\beta_1,\beta_2$.  The idea is that the large positive slope that appears once is sufficient to offset the other smaller negative slopes.

Theorem \ref{rthm2} was also extended \cite[Theorem 3]{reddi2018adam} to the setting where the $f_t$ have random dependence on $t$.  That is, $f_t(x)=-x$ with probability $1-p$, and $f_t(x)=cx$ with probability $p$ for some appropriate $0<p<1$, with $f_1,f_2,\ldots$ i.i.d. random functions.  In \cite[Theorem 5]{reddi2018adam}, it is also shown that Adam can converge to its optimum if the parameters $\beta_1,\beta_2$ change over time.

As pointed out in \cite{reddi2018adam}, the paper that introduced Adam \cite[Corollary 4.2]{kingma2015adam} mistakenly claimed that Adam does converge, i.e. it has $R_T/T$ converging to zero as $T\to\infty$.  Investigating this issue then led to Theorem \ref{rthm2}.

Note that in the above four examples of BERT, GPT-3, Llama 2 and DeepSeek-V3, they already choose $\beta_1<\sqrt{\beta_2}$, i.e. they choose parameters where Theorem \ref{rthm2} applies.

Nevertheless, the results of \cite{reddi2018adam} left open the question of the existence of similar counterexamples for $\beta_1\geq\sqrt{\beta_2}$.  Moreover, the choice of slope $c$ can be arbitrarily large when $\beta_1$ or $\beta_2$ are close to $1$, i.e. $c\approx \max(1/\log(1/\beta_1),1/\log(1/\beta_2))$ is required in \cite{reddi2018adam}.  So, it was not clear if an example for Adam with nonzero average regret could be constructed with uniformly bounded slopes, even when $\beta_1<\sqrt{\beta_2}$.

\subsection{Our Contribution}

In this work we provide such a family of examples with nonzero average regret for Adam for all parameters $\beta_1,\beta_2\in[0,1)$ and with uniformly bounded gradients.

\begin{theorem}[Main]\label{thm:main}
Let $\alpha_{t}>0$ with $\lim_{t\to\infty}\alpha_t=0$, $\lim_{t\to\infty}\frac{\alpha_{t+1}}{\alpha_t}=1$, $\sum_{t=1}^{\infty}\alpha_t=\infty$.
$\forall$ $\beta_1,\beta_2\in[0,1)$, $\eps\geq0$, $\exists$ $f_1,f_2,\ldots\colon[-1,1]\to\R$ with $1\leq|f_t'(x)|\leq3$, $\forall$ $t\geq1,x\in[-1,1]$ such that Adam has regret satisfying: $R_T /T$ does not converge to zero as $T\to\infty$.
\end{theorem}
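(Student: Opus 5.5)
Following the abstract, I would take $f_t(x)=c\,x$ when $t\equiv 1 \pmod 3$ and $f_t(x)=-x$ otherwise, with $c$ slightly larger than $2$ (say $c\in(2,3]$ chosen depending on $\beta_1,\beta_2$). Then $1\le|f_t'|\le 3$. Over each period the slopes sum to $c-2>0$, so the best fixed comparator is $x=-1$, and $\sum_{t\le T}f_t(-1)\approx -(c-2)T/3$. It therefore suffices to show that the iterates $x_t$ spend a positive fraction of time near $+1$, or more precisely that $\liminf_T \frac1T\sum_{t\le T}(f_t(x_t)-f_t(-1))>0$. The gradients $g_t$ do not depend on $x_t$, since the functions are linear. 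Hence $m_t,v_t,h_t$ are deterministic sequences, independent of the trajectory, and the only interaction with $x_t$ is through the projection.

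\textbf{Step 1: asymptotic periodicity of $h_t$.} Since $g_t$ is $3$-periodic, $m_t$ and $v_t$ converge exponentially fast (at rates $b^t$ and $q^t$) to $3$-periodic limits $m^{(j)},v^{(j)}$, $j\in\{0,1,2\}$, obtained by summing geometric series. For example,
\[
m^{(0)}=\frac{(1-b)(c-b-b^2)}{1-b^3},
\]
where index $0$ denotes the step right after the slope-$c$ function. So $h_t\to h^{(j)}$ along residues. The key quantity is the per-period drift $S\colonequals h^{(0)}+h^{(1)}+h^{(2)}$. The main claim, and the main obstacle, is that for every $b,q\in[0,1)$ and $\eps\ge0$ one can choose $c$ slightly above $2$ so that $S<0$. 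Then the effective update pushes $x$ upward on average, toward $+1$, even though the average gradient is positive.

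\textbf{Step 2: proving $S<0$.} At $c=2$ the gradient sum over a period is $0$. Because the weights $(1-b)b^k$ sum coherently, $\sum_j m^{(j)}=c-2$, which is exactly $0$ at $c=2$. The denominators, however, differ: $v^{(0)}$ is the largest, since it was just hit by $c^2=4$, while $v^{(1)}$ and $v^{(2)}$ are smaller. Hence $S=\sum_j m^{(j)}/(\sqrt{v^{(j)}}+\eps)$ has a sign determined by how $m^{(j)}$ correlates with $1/\sqrt{v^{(j)}}$. I would show that at $c=2$ the positive mass of $m$ sits at the index with the largest $v$, giving $S<0$ strictly. The cleanest route may be a rearrangement or Chebyshev-type inequality: $m^{(j)}$ and $\sqrt{v^{(j)}}$ are similarly ordered across $j$, with $\sum_j m^{(j)}=0$, and $1/(\sqrt{v}+\eps)$ is decreasing in $v$. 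The orderings must be checked for all $b,q$, including $b\ge\sqrt q$. This is where case analysis could be needed, and it is the step I expect to be hardest. The degenerate case $b=0$ is easy: $m^{(j)}=g$, and $v^{(0)}$ is strictly largest. By continuity in $c$, $S<0$ persists for $c\in(2,2+\delta)$.

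\textbf{Step 3: dynamics and regret.} Write $x_{t+1}=\proj(x_t-\alpha_t h_t)$. Since $\alpha_{t+1}/\alpha_t\to1$ and $h_t$ is asymptotically periodic, the sum over one period is
\[
\alpha_t(h_t+h_{t+1}+h_{t+2})=\alpha_t S+o(\alpha_t).
\]
As $\alpha_t\to0$, the within-period projection effects are $O(\alpha_t)$. Away from the boundary $+1$, $x$ then increases by $\approx\alpha_t|S|$ per period. Because $\sum\alpha_t=\infty$, the iterate reaches a neighborhood of $1$, and afterwards stays within $O(\alpha_t)\to0$ of $1$; the projection only helps, since it clips at $1$.

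So $x_t\to1$, and $\frac1T\sum_t f_t(x_t)\to (c-2)/3$. Meanwhile
\[
\frac1T\min_x\sum_{t\le T} f_t(x)\to -\frac{c-2}{3},
\]
so $R_T/T\to 2(c-2)/3>0$. Transients from the initialization $m_0=v_0=0$, and the case $\eps=0$ where $v_t>0$ from $t=1$ since $|g_t|\ge1$, contribute only $o(T)$.
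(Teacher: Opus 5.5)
Your architecture is the paper's: the same $3$-periodic example with slope $2+\delta$, and $3$-periodic limits of $(m_t,v_t)$, obtained via geometric series where the paper uses a contraction map, which is equivalent. It also shares the per-period drift $S$, continuity in $c$, drift of the iterates to $+1$, and $R_T/T\to 2\delta/3$.

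The genuine gap is Step 2, which is the whole content of the theorem and which you leave as a plan. You assert but do not verify the sign pattern of the $m^{(j)}$. Your Chebyshev route relies on orderings you have not checked, and you anticipate a case analysis in $b$ versus $\sqrt q$. None of that is needed. At $c=2$,
\[
m^{(0)}=\frac{(1-b)(2+b)}{1+b+b^2},\qquad m^{(1)}=-\frac{(1-b)^2}{1+b+b^2},\qquad m^{(2)}=-\frac{(1-b)(1+2b)}{1+b+b^2},
\]
and $v^{(0)}-v^{(1)}=3(1-q)/(1+q+q^2)>0$ and $v^{(0)}-v^{(2)}=3(1-q^2)/(1+q+q^2)>0$ for all $b,q\in[0,1)$. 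Put $A=-m^{(1)}>0$ and $B=-m^{(2)}>0$, so $m^{(0)}=A+B$ because the three limits sum to $c-2=0$. Let $w_j=1/(\sqrt{v^{(j)}}+\eps)$. Then
\[
S=A\,(w_0-w_1)+B\,(w_0-w_2)<0,
\]
since $w_0<w_1$ and $w_0<w_2$. This turns your sentence ``the positive mass of $m$ sits at the index with the largest $v$'' into a proof, and it is exactly the paper's argument. Indices $1$ and $2$ never need to be compared.

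Your Chebyshev route would also go through: $m^{(1)}-m^{(2)}=3b(1-b)/(1+b+b^2)\ge0$ and $v^{(1)}-v^{(2)}=3q(1-q)/(1+q+q^2)\ge0$. So the required orderings hold for every $b,q$, with no dependence on $b$ versus $\sqrt q$, but you would have to supply this extra check.

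Step 3 is correct in substance, but ``the projection only helps'' needs justification, since clipping at $+1$ followed by a downward step does lose drift. Start each period at the slope-$c$ step. By continuity at $c=2+\delta$, keep the signs $m^{(0)}>0>m^{(1)},m^{(2)}$. Then for all large $k$ the increments satisfy $u_1\le0\le u_2,u_3$ and $u_1+u_2+u_3\ge\kappa\,\alpha_{3k+1}$ for some $\kappa>0$. With $P=\Pi_{[-1,1]}$, monotonicity of $P$ and $P(z)\ge z$ for $z\le1$ give $P(P(P(z+u_1)+u_2)+u_3)\ge\min(1,z+\kappa\,\alpha_{3k+1})$; this is the paper's projection lemma. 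Together with $\sum_k\alpha_{3k+1}=\infty$, this forces the period-start iterates to reach $1$ and stay there. The $O(\alpha_t)$ within-period moves then give $x_t\to1$.
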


The example we use is simply $f_t(x)=-x$ for all $t\geq1$ except $t\text{ mod }3=1$, in which case $f_t(x)=(2+\delta)x$, for all $x\in[-1,1]$, where $\delta>0$ is chosen to be sufficiently small, depending on $\beta_1,\beta_2,\epsilon$.  That is, the slope of $f_t$ is $3$-periodic.  

Since $1\leq|f_t'(x)|\leq3$ for all $t\geq1$, $x\in[-1,1]$, the derivatives of the functions are uniformly bounded above and below, for all $\beta_1,\beta_2$.  

This same example showed nonzero regret of the $\beta_1=0$ case (known as RMSProp) of Adam in \cite[Theorem 6]{reddi2018adam} and \cite[Lemma 1]{huang2019}, inspiring Theorem \ref{thm:main}.

Despite the similarity of our example to the one from \cite{reddi2018adam}, our analysis is different and arguably simpler.  

As in \cite[Theorem 6]{reddi2018adam} in the $\beta_1=0$ case of Adam, we show that every three iterations of Adam leads to a net positive movement of $x_1,x_2,\ldots$ towards the point $x=1$, whereas the regret minimizer is $x=-1$.  However, we depart from \cite{reddi2018adam} by using an elementary fixed point argument via the contractive mapping theorem.  A related perspective was used in \cite{bockweiss2022adam}, albeit for quadratic functions.

This example also shows nonzero average regret for AdamW, RMSProp, NAdam, Adan, AdaMax, and Muon.

One might naturally ask if Theorem \ref{thm:main} holds when the highly structured periodic $f_1,f_2,\ldots$ is changed to a less structured i.i.d. variant of the above example, e.g. if for any $t\geq1$, $f_t(x)=ax$ with probability $1/3$, and $f_t(x)=-x$ with probability $2/3$, where $f_1,f_2,\ldots$ are all i.i.d.  We show the same nonzero average regret conclusion does hold in this case.  We present this result in the Appendix, Section \ref{secapp}.  Consequently, the $3$-periodicity of the example used in Theorem \ref{thm:main} is not required to obtain the theorem's conclusion.

The proof of Theorem \ref{thm:main} is  written for the uncorrected moments.  The same projected update with
standard bias-corrected moments, with
\[
    \widetilde m_t=\frac{m_t}{1-b^t},
    \qquad
    \widetilde v_t=\frac{v_t}{1-q^t},
    \qquad
    h_t\colonequals\frac{\widetilde m_t}{\sqrt{\widetilde v_t}+\eps},
\]
has the same asymptotic properties, since
$(1-b^t)^{-1}$ and $(1-q^t)^{-1}$ tend to one as $t\to\infty$.  Therefore the same asymptotic argument
applies to the bias-corrected case.

\subsection{Outline of Proof of Theorem \texorpdfstring{\ref{thm:main}}{Main}}

\begin{itemize}
\item Let $a\geq2$.  Let $f_{3k+1}(x)=ax$ and $f_{3k+2}(x)=f_{3k+3}(x)=-x$ for all $k\geq0$, $x\in\R$.
\item A contractive mapping argument shows $(m_{3k+1},m_{3k+2},m_{3k+3})$ and $(v_{3k+1},v_{3k+2},v_{3k+3})$ from \eqref{adameq} converge to $(M_1(a),M_2(a),M_3(a))$ and $(V_1(a),V_2(a),V_3(a))$, as $k\to\infty$.
\item Verify that the negative mean drift $S(a)\colonequals\sum_{i=1}^3 \frac{M_i(a)}{\sqrt{V_i(a)}+\eps}$ of $x_1,x_2,\ldots$ from three iterations of Adam, is negative when $a=2$.
\item A continuity argument shows, for $\delta>0$ small enough, $S(a)=S(2+\delta)<0$, so the negative mean drift is still negative for such $a$.
\item Conclude then that $\lim_{t\to\infty}x_{t}=1$.
\item Since $\sum_{t=1}^{3}f_{t}(x)=\delta x$, $\sum_{t=1}^{T}f_{t}(x)$ is minimized at $x=-1$ for $T$ large, so\\ $\lim_{T\to\infty}R_{T}/T=2\delta/3>0$, thereby completing the proof.
\end{itemize}

This argument is flexible enough to extend to other variants of Adam.

\begin{theorem}\label{thm:main2}
Theorem \ref{thm:main} also holds for: AdamW, NAdam, Adan, AdaMax and Muon
\end{theorem}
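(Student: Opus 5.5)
We will reuse, for each variant, the same three-periodic example $f_{3k+1}(x)=ax$, $f_{3k+2}(x)=f_{3k+3}(x)=-x$ with $a=2+\delta$, and follow the skeleton of the proof of Theorem \ref{thm:main}. The key structural fact is that all gradients are constants independent of $x_t$, so every moment-type recursion is an affine (or, for AdaMax, max-affine) recursion driven by a $3$-periodic input. Such a recursion with contraction factor $b,q<1$ over one period is a contraction on $\R^3$. Hence the state $(m_{3k+i},v_{3k+i})$ (and, for Adan, the extra difference-moment $n_t$, built from $g_t-g_{t-1}\in\{a+1,0,-a-1\}$) converges as $k\to\infty$ to a periodic limit. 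The per-step directions $h_{3k+i}$ therefore converge to limits $H_i(a)$, and the drift over a period converges to $S(a)=\sum_{i=1}^3 H_i(a)$. The plan is then: (i) compute these limits explicitly; (ii) show $S(2)<0$; (iii) get $S(2+\delta)<0$ by continuity in $a$; (iv) run the same endgame as for Adam. In that endgame, $\alpha_t\to0$, $\alpha_{t+1}/\alpha_t\to1$ and $\sum\alpha_t=\infty$ force $x_t\to1$, and the average regret tends to $2\delta/3$.

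Case by case. \emph{NAdam}: $h_t$ has numerator $bm_t+(1-b)g_t$, which is again a periodic affine combination. The limiting numerators are $bM_i+(1-b)G_i$ with $G=(a,-1,-1)$, and the denominators $V_i$ are unchanged. The sign check at $a=2$ becomes a weighted combination of the Adam check (already verified) and the $\beta_1=0$ RMSProp check (also a special case of the Adam computation). Both are negative, and the weights are positive, which gives the result. \emph{AdamW}: the update is $x_{t+1}=\proj((1-\lambda\alpha_t)x_t-\alpha_t h_t)$, and the weight-decay term is $O(\alpha_t)$ times a bounded quantity pulling toward $0$. Near $x=1$ it contributes $+\lambda\alpha_t$ per step, against the drift $-\alpha_t S(a)/3>0$. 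So we instead argue that $x_t$ converges to the point $\min(1,-S(a)/(3\lambda))>0$, or more simply that $\liminf x_t\ge x^*>0$. Any comparator gap $x^*>0$ versus the minimizer $-1$ gives average regret $\ge \delta(1+x^*)/3>0$. \emph{AdaMax}: $v_t=\max(qv_{t-1},|g_t|)$ is eventually periodic. With $|g|\le a$, once $a$ occurs we get $v_{3k+1}=a$, $v_{3k+2}=\max(qa,1)$, $v_{3k+3}=\max(q^2a,1)$, and $S$ is explicit. \emph{Muon}: in dimension one the orthogonalization of the momentum is $\operatorname{sign}(m_t)$. So $S=\sum_i\operatorname{sign}(M_i)$, and we need only show that $M_2,M_3<0$ while $M_1$ is arbitrary, giving $S\le -1$. \emph{Adan}: we linearize similarly with its three EMAs.

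The main obstacle is verifying $S(2)<0$ uniformly over all $\beta_1,\beta_2\in[0,1)$ and $\eps\ge0$ for each variant, since the closed forms are rational functions of $b,q$ with square roots. For Adam this was done earlier, and NAdam and AdamW reduce to it as above. For AdaMax and Adan we would first try a direct algebraic sign analysis, splitting into cases according to which term attains the max (e.g.\ $q^2a\ge1$ or not). We would also check the limit regimes $b\to1$ or $q\to1$, where $S(2)\to0$ and one needs the sign of the leading-order term. As in the Adam argument, $\eps>0$ only shrinks the magnitudes monotonically, so it is handled by noting that each denominator is increased, which we must check preserves the sign. If uniformity fails in some corner, we fall back on choosing $\delta$ depending on $(\beta_1,\beta_2,\eps)$, which the statement permits.
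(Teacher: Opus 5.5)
\textbf{Gap: the Adan drift inequality.} Your skeleton matches the paper: periodic steady states by contraction, then $S(2)<0$, then continuity in $a$, then the projection endgame. Your NAdam identity $S_{\rm NAdam}(2)=b\,S^{(b,q)}(2)+(1-b)\,S^{(0,q)}(2)$ is correct and neater than the paper's repetition of the $A,B$ argument, and Muon is fine. The gap is the step you flag yourself: everything hinges on the strict inequality $S(2)<0$ at each fixed parameter point, and for Adan you give no argument. Here the Adam mechanism actually breaks.

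The Adan numerators are $U_i=M_i+qD_i$, where $D_i$ is the steady state of the difference moment $d_t$. You attached $g_t-g_{t-1}$ to $n_t$, but $n_t$ averages $(g_t+q(g_t-g_{t-1}))^2$, with periodic inputs $s_1=(2+3q)^2$, $s_2=(1+3q)^2$, $s_3=1$ at $a=2$. One still has $U_1+U_2+U_3=0$, so $S_{\rm Adan}(2)=A(w_1-w_2)+B(w_1-w_3)$ with $A=-U_2$, $B=-U_3$ and $w_i=1/(\sqrt{N_i}+\eps)$. However, $N_1$ need not be the largest limiting denominator. Indeed $N_1-N_2$ has the sign of $s_1-(1+r)s_2+rs_3$, which for $q$ near $1$ is about $9-15r<0$ once $r>3/5$. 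Then $w_1-w_2>0$, and the sign of $S_{\rm Adan}(2)$ becomes a quantitative trade-off.

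The paper needs an extra idea here. Since $-\phi'$ is positive and decreasing for $\phi(x)=1/(\sqrt{x}+\eps)$, one gets $(w_3-w_1)/(w_1-w_2)\ge (N_1-N_3)/(N_2-N_1)>1/q\ge A/B$. The last two inequalities come from explicit algebra, using $D_2/D_3=1/q$ and $M_2/M_3\le 1$. Nothing in your plan supplies such a comparison. Your fallbacks do not address it either:
\begin{itemize}
\item the regimes $b,q\to1$ are irrelevant to a pointwise sign;
\item letting $\delta$ depend on the parameters cannot rescue a point where $S(2)\ge 0$;
\item ``$\eps$ only increases denominators'' does not by itself preserve the sign of a mixed-sign sum. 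What survives adding $\eps$ is the ordering of the $w_i$ (Adam, AdaMax) or the convexity comparison above (Adan).
\end{itemize}
For the projection step you also need $U_2(a),U_3(a)<0$, which you did not check.

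\textbf{AdaMax and AdamW.} For AdaMax the missing check is easy and needs no case split. Since $v_t\le a$, one has $v_{3k+1}=a$ exactly, and $V_1=2>\max(2q,1),\max(2q^2,1)$. Together with $M_1(2)+M_2(2)+M_3(2)=0$, the Adam decomposition applies verbatim.

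For AdamW you take a genuinely different route. The paper moves to the domain $[-4,-2]$, where $-\lambda\alpha_t x_t\ge 0$. Each AdamW step map then dominates the corresponding monotone Adam step map, and comparison gives $x_t\to-2$ and regret exactly $2\delta/3$. You stay on $[-1,1]$, which is closer to the literal statement of the main theorem. The price is an equilibrium analysis around $x^*=\min(1,\eta/(3\lambda))$ with $\eta=-S(a)$; note that near $x=1$ the decay moves $x$ by $-\lambda\alpha_t$, not $+\lambda\alpha_t$. This route is viable: for example, show that the per-period displacement is at least $\alpha_{3k+1}\eta/2$ whenever $x_{3k+1}\le x^*/2$. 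But it is not carried out. To turn $\liminf x_t>-1$ into a positive average-regret bound, you also need $x_t$ to be nearly constant within each period, because the $g_t$ have mixed signs.
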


\begin{theorem}\label{thm:main3}
Let $\alpha>0$.  Then Theorem \ref{thm:main} holds almost surely for Adam with i.i.d. selection of the functions $f_1,f_2,\ldots$. and with step size $\alpha_t=\alpha/\sqrt{t}$ for all $t\geq1$
\end{theorem}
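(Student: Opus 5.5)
We prove Theorem \ref{thm:main3} by reducing the i.i.d.\ case to a law-of-large-numbers statement about the drift. Take $f_t(x)=ax$ with probability $1/3$ and $f_t(x)=-x$ with probability $2/3$, independently, where $a=2+\delta$ with $\delta>0$ small. Since $f_t'$ does not depend on $x_t$, the gradient sequence $g_t\in\{a,-1\}$ is i.i.d.\ and independent of the iterates. The moments $m_t$, $v_t$, and hence $h_t=m_t/(\sqrt{v_t}+\eps)$, are therefore deterministic functions of the gradient history alone. Unrolling the recursions gives $m_t=(1-b)\sum_{s\le t}b^{t-s}g_s$ and $v_t=(1-q)\sum_{s\le t}q^{t-s}g_s^2$. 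Define the stationary versions $m_t^\infty$, $v_t^\infty$ by summing over $s\in\Z$ for a two-sided i.i.d.\ extension. They differ from $m_t$, $v_t$ by $O(b^t+q^t)$, and because $v_t\ge(1-q)$ once $t\ge1$, also $|h_t-h_t^\infty|=O(b^t+q^t)$. The process $h_t^\infty$ is stationary and ergodic, being a factor of an i.i.d.\ shift, and bounded by $3/\sqrt{1-q}$. Let $\mu(a)=\mathbb{E}\,h_t^\infty$.

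\textbf{Step 1 (sign of the drift).} Show $\mu(2)<0$. By symmetry of the stationary expectation, $\mu(2)$ equals one third of the expectation of the sum of $h$ over three consecutive steps. The natural route is to condition on how many of the recent gradients equal $2$. The more promising route is to compare with the periodic case: in the periodic case the authors verify $S(2)<0$, and the mechanism is that large gradients inflate $v$, which shrinks their own contribution. The key identity is $\mathbb{E}\,g_t=\frac{a}{3}-\frac23=\frac{\delta}{3}$, which vanishes at $a=2$. So $\mathbb{E}\,m^\infty=0$ there, and $\mu(2)=\mathbb{E}[m/(\sqrt v+\eps)]$ is negative if and only if $m$ and $1/(\sqrt v+\eps)$ are negatively correlated. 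Heuristically this is clear: positive $m$ comes from recent $2$'s, which also raise $v$. To prove it, write $m=(1-b)\sum_k b^k g_{-k}$. Each term has $\mathbb{E}[g_{-k}]=0$, so $\mathbb{E}[g_{-k}\phi(v)]=\mathrm{Cov}(g_{-k},\phi(v))$ with $\phi(v)=1/(\sqrt v+\eps)$. Since $v$ is increasing in $g_{-k}^2$ and $g_{-k}^2$ is an increasing function of $g_{-k}$ on $\{-1,2\}$, $\phi(v)$ is decreasing in $g_{-k}$ with the other coordinates held fixed. Conditioning on the other coordinates then gives a strictly negative covariance via a Chebyshev/Harris-type inequality. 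This is the step I expect to be the main obstacle, but the covariance argument seems clean. Continuity of $\mu$ in $a$ follows by dominated convergence, since $v\ge 1-q>0$. Hence $\mu(2+\delta)<0$ for $\delta$ small.

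\textbf{Step 2 (iterates go to $1$).} With $\alpha_t=\alpha/\sqrt t$, consider the weighted sums $\sum_{t\le T}\alpha_t h_t$. By the ergodic theorem $\frac1T\sum_{t\le T}h_t\to\mu<0$ almost surely. Abel summation with the decreasing weights $\alpha_t$ then gives $\sum_{t=s}^{T}\alpha_t h_t\le \frac{\mu}{2}\sum_{t=s}^T\alpha_t+o(1)$ for all large $s<T$, where the $o(1)$ is uniform as $s\to\infty$. Here we use that $\alpha_t\to0$ and that $\sum\alpha_t=\infty$. A standard projected-walk argument then applies. The steps are uniformly small, since $\alpha_t|h_t|\to0$. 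Projection at $+1$ only increases $x$, and projection at $-1$ cannot happen infinitely often while there is a persistent positive drift. Together these give $x_t\to1$ almost surely.

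\textbf{Step 3 (regret).} By the strong law of large numbers, $\frac1T\sum_t f_t(x)=x\cdot\frac1T\sum_t g_t\to x\delta/3$ uniformly on $[-1,1]$. So $\frac1T\min_x\sum_t f_t(x)\to-\delta/3$. Also $\frac1T\sum_t f_t(x_t)=\frac1T\sum_t g_t x_t\to\delta/3$, because $x_t\to1$ and $g_t$ is bounded. Therefore $R_T/T\to2\delta/3>0$ almost surely.
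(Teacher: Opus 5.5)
Steps 1 and 3 are sound. Your coordinate-wise argument for $\mu(2)<0$ is in fact simpler than the paper's. The paper applies the Harris inequality on the infinite product space and gets strictness from a total-covariance argument conditioned on the past. You instead expand $\mu(2)=(1-b)\sum_k b^k\,\mathbb{E}[g_{-k}\phi(v)]$ and use that each $g_{-k}$ is mean zero and independent of the other coordinates. One small correction: for $q=0$ the terms with $k\ge1$ vanish, so strictness comes from the $k=0$ term only.

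\textbf{The gap is in Step 2.} The bound $\sum_{t=s}^T\alpha_t h_t\le\frac{\mu}{2}\sum_{t=s}^T\alpha_t+o(1)$, uniformly for large $s$, is true for this process. But it does not follow from the ergodic theorem plus Abel summation. Put $S_n\colonequals\sum_{t\le n}(h_t-\mu)$. Abel summation gives
\[
\sum_{t=s}^T\alpha_t(h_t-\mu)=\alpha_T(S_T-S_{s-1})+\sum_{t=s}^{T-1}(\alpha_t-\alpha_{t+1})(S_t-S_{s-1}).
\]
The ergodic theorem only yields $S_n=o(n)$, so the right side is merely $o(\sqrt T)$, not $o(1)$. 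Meanwhile the drift $\frac{|\mu|}{2}\sum_{t=s}^T\alpha_t$ is of order $(T-s)/\sqrt s$, which is $O(1)$ for windows of length about $\sqrt s$. What you actually need is a moving-window statement: averages of $h_t$ over windows of length $\gtrsim\sqrt N$ near time $N$ are close to $\mu$, uniformly in all such windows and all large $N$. The pointwise ergodic theorem does not give this, and for general stationary ergodic sequences, averages over windows like $[n,n+\sqrt n]$ need not converge. The hypotheses $\alpha_t\to0$ and $\sum\alpha_t=\infty$ do not supply it either.

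This missing estimate is exactly the uniform tail adverse excursion bound $\Delta_n\to0$ in Lemma \ref{lem:weighted-positive-drift}, which the paper singles out as having no deterministic analogue. The paper proves it from the Bernoulli-shift structure, in three steps:
\begin{itemize}
\item Since $Z_t=-h_t^*$ depends on $X_{t-\ell}$ with sensitivity $O(\rho^\ell)$, McDiarmid's inequality gives $\P(\sum_{t=r}^s\alpha_tZ_t\le-\eta)\le\exp(-c_\eta\sqrt N)$ for every subinterval $[r,s]$ of $\{N,\ldots,2N-1\}$.
\item A union bound over the at most $N^2$ such intervals, plus Borel--Cantelli along $N=2^j$, controls all subintervals of late dyadic blocks.
\item Complete late dyadic blocks have positive weighted sum by the ergodic/Toeplitz argument you do have, so arbitrary late intervals can be glued together.
\end{itemize}

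This estimate is indispensable, not cosmetic. After a visit $x_\tau=1$, the only available control is $1-x_m\le\max\bigl(-\sum_{t=r_m}^{m-1}\alpha_tZ_t,0\bigr)$, where $Z_t=-h_t$ and $r_m$ is the last visit to $1$ before $m$. So $x_t\to1$ hinges precisely on these tail excursions vanishing. Your remark about projections is also reversed. It is projection at $-1$ that can only increase $x$. Between visits to $1$ no projection at $+1$ occurs, and that is what gives the displayed lower bound on $x_m$. The other part of Step 2 is fine: $x_t=1$ infinitely often does follow from $\sum_t\alpha_tZ_t\to+\infty$, which the ergodic theorem with Toeplitz summation provides.
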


\subsection{Organization}

Theorem \ref{thm:main} will be proven in Section \ref{secmain}, by combining the previous Sections \ref{sec:steady-state}, \ref{secdrift} and \ref{secproj}.

Theorem \ref{thm:main2} will be stated more formally as separate versions of Theorem \ref{thm:main}, spread across the following sections: \ref{secadamw} for AdamW; \ref{secnadam} for NAdam; \ref{secadan} for Adan; \ref{secadamax} for AdaMax; and \ref{secmuon} for Muon.  Theorem \ref{thm:main3} is proven in Section \ref{secapp}.

\subsection{Further Discussion and Related Work}
\label{sec:related-work}

\subsubsection{Adam Alternatives such as AMSGrad}

Due to the convergence issues they found for Adam, 
Reddi et al.~\cite{reddi2018adam} proposed AMSGrad, which adds an additional parameter $\widehat v_t$ to Definition \ref{adamopt}, and then changes \eqref{adamdef} to
$$
    x_{t+1}
    \colonequals\proj_{[-1,1]}\left(x_t-\alpha_t h_t\right),
    \qquad
    h_t\colonequals\frac{m_t}{\sqrt{\widehat v_t}+\eps},\qquad
    \widehat v_t\colonequals \max(v_t, \widehat v_{t-1})
    \qquad\forall\,t\geq1,
$$
With this change, the previous periodicity issues for the squared gradient are removed.  AMSGrad then has provable regret bounds of the form $R_T = O(T^{1/2})$, so in particular $R_T /T \to0$ as $T\to\infty$ \cite{ala20}, assuming $\beta_1<\sqrt{\beta_2}$.  (\cite{reddi2018adam} also proved a regret bound of this form, but it needed to assume that $\beta_1$ decreased over time.)

Despite the superior theoretical guarantees of AMSGrad when compared to Adam, it appears that Adam is still more widely used in practice.

Subsequent variants, including
Yogi~\cite{zaheer2018yogi} and AdaBound/AMSBound~\cite{luo2019adabound}, were partly motivated by overcoming Adam's convergence issues found in \cite{reddi2018adam}.

\subsubsection{Adam divergence  with unbounded gradients}

In this work, we fix the parameters $\beta_1,\beta_2\in[0,1)$, and then produce an example of nonzero average regret for Adam with derivatives uniformly bounded above and below.  One might make these choices in the opposite order, i.e. fixing a function sequence (with possibly large derivatives) and then choosing $\beta_1,\beta_2$ to obtain a convergent method.  The latter perspective is taken in \cite{zhang2022adam,zhang2026adam}.  They show it is possible to choose $\beta_1,\beta_2$ (after the functions being optimized are fixed) such that Adam converges.

They also show that, for any $0\leq \beta_1,\beta_2<1$, there are functions such that Adam on the real line (without projection) diverges.  Their example \cite[Equation (3.1)]{zhang2026adam} is the following quadratic modification of \cite{reddi2018adam}: for any $x\in\R$, $a>0$, $1\leq i\leq n-1$, $n\geq4$,
$$
f_{0}(x)
\colonequals
\begin{cases}
(1+(n-1)a)x & \text{, if }x\geq-1\\
\frac{(1+(n-1)a)}{2}(x+2)^2 - \frac{3n}{2}& \text{, if }x<-1.\\
\end{cases}
\quad
f_{i}(x)
\colonequals
\begin{cases}
-ax & \text{, if }x\geq-1\\
-\frac{a}{2}(x+2)^2 +\frac{3}{2}& \text{, if }x<-1.\\
\end{cases}
$$
There are, however, some issues with this example, namely these functions are discontinuous unless $a=1$, and the proof of \cite[Theorem 3.5]{zhang2026adam} is only provided when $a=1$ and when the step size is constant in each training epoch.  These issues are fixable, but more importantly condition C1 \cite[Equation (8.3)]{zhang2026adam} seems to require $\beta_1<\sqrt{1-\beta_2}$, i.e. not all $\beta_1,\beta_2\in[0,1)$ are covered by their proof for the $a=1$ case; similarly, the suggested choice of $a=(n-1)^{-2}$ does not seem to allow all $\beta_1,\beta_2$ values in condition C1.  Also, condition C3 \cite[Equation (8.5)]{zhang2026adam} requires choosing a suitably small step size.  In any case, \cite[Theorem 3.5]{zhang2026adam} is incomparable to our Theorem \ref{thm:main} since their functions have quadratic components with unbounded gradients on an unbounded domain, whereas our functions have gradients bounded above and below on the bounded domain $[-1,1]$ with projection onto that domain.  Despite the above issues, the following modification should reproduce the result of \cite[Theorem 3.5]{zhang2026adam}: $f_{i}(x)=-x^{2}$ for $0<i<2n/3$ and $f_{i}(x)=16x^2$ for $2n/3\leq i\leq n-1$, where $n$ is chosen sufficiently large depending on $\beta_1,\beta_2$, since on the set $[0,\infty)$ we have $m_{i}/\sqrt{v_{i}}\approx -1$ for most $0<i<2n/3$ and $m_{i}/\sqrt{v_i}\approx1$ for most $2n/3\leq i\leq n-1$, so that $x_1,x_2,\ldots$ tends toward $+\infty$ while the true minimum occurs at $0$.

A different perspective for Adam is taken in Ahn, Zhang, Kook, and Dai~\cite{ahn2024adamftrl} where they interpret Adam as a discounted
Follow-the-Regularized-Leader method.

\subsubsection{Dynamical Systems Approach}
Da Silva and Gazeau~\cite{dasilva2020ode} derive a continuous-time ODE system for adaptive
first-order methods and analyze the convergence and stability of the limiting dynamics.  

Bai, Zhao, Zhou, Xu, and
Zhang~\cite{bai2026degenerate} study Adam on highly degenerate polynomials and give a
hyperparameter phase diagram containing stable convergence, spikes, and SignGD-like oscillation
regimes. These
papers concern related adaptive optimizers and stability phenomena, but not the bounded
online-linear regret setting of Theorem~\ref{thm:main}.

\subsubsection{Nonconvergence in traditional stochastic optimization frameworks}

The results below concern traditional stochastic optimization, instead of online optimization.

Wang and Klabjan~\cite{wang2022vradam} give stochastic divergence examples for Adam in
unconstrained strongly convex optimization, including examples that diverge in expectation or with
high probability and examples that persist for large mini-batches.  They also propose a variance-reduced
Adam-type method and prove convergence under a variance-reduction assumption.

Dereich, Graeber, and Jentzen~\cite{dereich2024nonvanishing} prove a nonconvergence
result for Adam and other adaptive stochastic-gradient methods when the learning rates are
asymptotically bounded away from zero.  

Dereich, Do, Jentzen, and von Wurstemberger~\cite{dereich2025symmetry} prove an Adam
symmetry theorem for stochastic strongly convex quadratic problems.  In their formulation, Adam
converges to the true minimizer if and only if the data distribution is symmetric.

Jentzen and
Riekert~\cite{jentzenriekert2025global} prove that Adam and SGD-type methods can fail with
high probability to converge to global minimizers in shallow ReLU-network training landscapes.
Do, Hannibal, and Jentzen~\cite{dohannibaljentzen2024relu} prove analogous high-probability
nonconvergence to global minimizers for a broad class of SGD methods, including Adam, in
data-driven supervised deep learning with ReLU activations.  Do, Jentzen, and Riekert~\cite{dojentzenriekert2025risk}
show nonconvergence of the true risk to the optimal risk for a large class of SGD-type methods,
again including Adam.

Toint~\cite{toint2023adam} gives a very simple deterministic one-dimensional example showing
that fixed-stepsize Adam can diverge on a smooth function with Lipschitz continuous gradient,
without gradient noise, irrespective of the method parameters.

\subsubsection{Contrast with NosAdam, AMSGrad, AdaGrad}

The one-dimensional counterexample we presented for the nonzero average regret of Adam and its relatives does not extend in a straightforward way to Adam variants with ``longer long-term memory'' such as AdaGrad, AMSGrad, NosAdam, etc. For example, instead of using the iteration for $v_t$ from \eqref{adameq}, AMSGrad keeps track of the maximum of $v_t$ with the additional parameter $\widehat{v}_t\colonequals\max(v_t, \widehat{v}_{t-1})$, and it then uses $h_{t}\colonequals \frac{m_t}{\sqrt{\widehat{v}_t}+\epsilon}$ in \eqref{adamdef}.  This eliminates the periodicity issue of $v_t$ that occurs for these counterexamples.  And indeed, these other methods often have better provable regret bounds than Adam.

\section{Steady-state Moments via Contraction}
\label{sec:steady-state}

We now prepare to prove Theorem \ref{thm:main}.  We first show the promised convergence of $m_{3k+i}$ and $v_{3k+1}$ as $k\to\infty$ using the contractive mapping theorem.  

Throughout this paper, we assume the gradients $g_{t}=f_{t}'(x_t)$ from \eqref{adameq} satisfy
\begin{equation}\label{gkdef}
    g_{3k+1}=a=2+\delta,
    \qquad
    g_{3k+2}=-1,
    \qquad
    g_{3k+3}=-1,\qquad\forall\,k\geq0
\end{equation}
where $\delta>0$ will be chosen sufficiently small.

\begin{lemma}\label{lemma1}
Assume \eqref{gkdef} holds.
Then there exist unique triples $(M_1,M_2,M_3)$ and $(V_1,V_2,V_3)$ that are fixed points of three iterations of \eqref{adameq}.  Moreover, $|m_{3k+i}-M_i|\leq O(b^{3k})$ and $|v_{3k+i}-V_i|\leq O(q^{3k})$ for all $k\geq0$, $1\leq i\leq 3$.
\end{lemma}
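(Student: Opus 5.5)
Since the gradients in \eqref{gkdef} are prescribed and do not depend on the iterates, the recursions in \eqref{adameq} for $m_t$ and $v_t$ decouple from $x_t$. Each is a linear inhomogeneous recursion with a $3$-periodic forcing term. We therefore treat $m$ and $v$ separately, using the same argument for each; we carry it out for $m$, and for $v$ we replace $b$ by $q$ and $g_t$ by $g_t^2$, with $g_t^2\in\{a^2,1,1\}$.

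Define the three-step map $\Phi(m)\colonequals b^3 m + c_1$ on $\R$, where $c_1\colonequals(1-b)\bigl(b^2 g_{3k+1}+b\,g_{3k+2}+g_{3k+3}\bigr)=(1-b)(ab^2-b-1)$. This is exactly the map $m_{3k}\mapsto m_{3k+3}$, and $c_1$ does not depend on $k$ by \eqref{gkdef}. Since $b\in[0,1)$, $\Phi$ is a contraction with constant $b^3<1$. By the contractive mapping theorem, or simply by solving the linear equation, $\Phi$ has the unique fixed point $M_3=c_1/(1-b^3)$. We then set $M_1\colonequals bM_3+(1-b)a$ and $M_2\colonequals bM_1-(1-b)$. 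Applying the one-step recursion to these values gives back $M_3=bM_2-(1-b)$, so the triple $(M_1,M_2,M_3)$ is invariant under three iterations of \eqref{adameq}.

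For uniqueness, note that any triple fixed by the three-step iteration must have its third entry fixed by $\Phi$, hence equal to $M_3$. The other two entries are then determined by the one-step recursion. The same holds if one instead starts the cycle from index $1$ or $2$, since each of the maps $m_{3k+i}\mapsto m_{3k+3+i}$ is also of the form $m\mapsto b^3m+\text{const}$.

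For the rate, $m_{3k+3}-M_3=b^3(m_{3k}-M_3)$, so $|m_{3k+3}-M_3|=b^{3k+3}|m_0-M_3|=b^{3k+3}|M_3|$ because $m_0=0$. The one-step recursion then propagates the error: $m_{3k+1}-M_1=b(m_{3k}-M_3)$, which has absolute value $b^{3k+1}|M_3|$, and $m_{3k+2}-M_2=b^2(m_{3k}-M_3)$. All of these are $O(b^{3k})$, with a constant bounded by $|M_3|\leq\max(a,1)\cdot 3$. The statement for $v$ follows in the same way with $V_3=(1-q)(a^2q^2+q+1)/(1-q^3)$.

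There is no real obstacle here. The only points needing care are degenerate cases: when $b=0$ the bound $O(b^{3k})$ means the error is zero for $k\geq1$, which the exact formula confirms (the case $k=0$ is absorbed into the constant, reading $0^0=1$). One should also check the indexing convention, i.e.\ that the error at $t=3k+i$ is $b^{3k+i}|M_3|\leq b^{3k}|M_3|$.
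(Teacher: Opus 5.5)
Your proof is correct and takes essentially the same route as the paper. Both reduce to the one-period affine return map $M\mapsto b^3M+\text{const}$, get its unique fixed point from the contraction mapping theorem (or the linear equation), recover the other two entries from the one-step recursion, and read off the geometric rate. The differences are cosmetic: you anchor at $m_{3k}\mapsto m_{3k+3}$ (fixed point $M_3$) rather than at $M_1$, and you make the rate explicit as $m_{3k+i}-M_i=-b^{3k+i}M_3$ using $m_0=0$, which the paper leaves implicit.
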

Note that, by \eqref{gkdef}, the iteration \eqref{adameq} does not depend on $x_t$.
\begin{remark}
We will show using elementary algebra that
\begin{equation}\label{eq:M-formulas}
    M_1(a)=\frac{a-b-b^2}{1+b+b^2},
    \qquad
    M_2(a)=\frac{ab-b^2-1}{1+b+b^2},
    \qquad
    M_3(a)=\frac{ab^2-b-1}{1+b+b^2}.
\end{equation}
\begin{equation}\label{eq:V-formulas}
    V_1(a)=\frac{a^2+q+q^2}{1+q+q^2},
    \qquad
    V_2(a)=\frac{a^2q+q^2+1}{1+q+q^2},
    \qquad
    V_3(a)=\frac{a^2q^2+q+1}{1+q+q^2}.
\end{equation}
Here we added the parameter $a$ to our notation to emphasize the dependence of $M_i,V_i$ on $a$.
\end{remark}
\begin{proof}
Let $M_1\in\R$.  Recall $g_{1}=a$, $g_{2}=g_3=-1$, by \eqref{gkdef}, so two iterations of \eqref{adameq} give
\begin{equation}\label{m2eq}
    M_2
    \stackrel{\eqref{adameq}}{=}
    bM_1-(1-b),\qquad t=2
\end{equation}
\begin{equation}\label{m3eq}
    M_3
    \stackrel{\eqref{adameq}}{=}
    bM_2-(1-b)\stackrel{\eqref{m2eq}}{=}b^2M_1-(1-b)(1+b),\qquad t=3.
\end{equation}
If we have a fixed point $(M_1,M_2,M_3)$, then \eqref{adameq} for $t=4$ should return to $M_1$, i.e. $M_1$ would be equal to (using $g_4=a$)
\[
    bM_3+(1-b)a
    \stackrel{\eqref{m3eq}}{=}
    b^3M_1+(1-b)(a-b-b^2).
\]
Thus the one-period return map for $M_1$ is the affine contraction $\Phi_b\colon\R\to\R$ defined by
\begin{equation}\label{phidef}
    \Phi_b(M)
    \colonequals
    b^3M+(1-b)(a-b-b^2),\qquad\forall\, M\in\R.
\end{equation}
 Since $b\in[0,1)$, we have $b^3<1$, so $\Phi_b$ has a unique fixed point by the contractive mapping theorem ($|\Phi_b (M) - \Phi_b (M')|\leq b^{3}|M-M'|<|M-M'|$ for all $M,M'\in\R$).  Similarly, $M_2,M_3$ are each the unique fixed point of a contraction, each of the form $M\mapsto b^{3}M+\text{constant}$.  Solving for $M_1$ in $\Phi_b(M_1)=M_1$ gives the first part of \eqref{eq:M-formulas}, then \eqref{m2eq} and \eqref{m3eq} yield the last part of \eqref{eq:M-formulas}.  The contraction property for $\Phi_b$ implies $|m_{3k+i}-M_i|\leq O(b^{3k})$, $\forall$ $k\geq0$, $1\leq i\leq 3$.

%
%

The argument for $(V_1, V_2, V_3)$ is analogous.  Since $g_t$ does not depend on $x_t$, three iterations of \eqref{adameq} for $v_t$ results in a contractive mapping of the form $\Psi_q(V)\colonequals q^{3}V+\text{ constant}$, i.e.
\begin{equation}\label{psidef}
    \Psi_q(V)
    \colonequals
    q^3V+(1-q)(a^2+q+q^2),\qquad\forall\,V\in\R.
\end{equation}
Since $q\in[0,1)$, this map has a unique fixed point $V_1$, by the contractive mapping theorem.  Solving for  $\Psi_q(V_1)=V_1$ produces the first equation in \eqref{eq:V-formulas}, and then \eqref{adameq} yields the last two parts of \eqref{eq:V-formulas}.   The contraction property implies $|v_{3k+i}-V_i|\leq O(q^{3k})$, $\forall$ $k\geq0$, $1\leq i\leq 3$.


\end{proof}

\section{Drift Away from the Minimizer}\label{secdrift}

In Lemma \ref{lemma1}, we found exponential convergence of the $m_{3k+i}$ and $v_{3k+i}$ terms from \eqref{adameq} to their limiting values as $k\to\infty$, $\forall$ $1\leq i\leq3$.  In this section, we then deduce the ``drift'' of the iterates $x_{t}$ themselves to the right endpoint $x=1$.  This ``drift'' will be quantified by
\begin{equation}\label{eq:S-def}
    S(a)\colonequals\sum_{i=1}^3 \frac{M_i(a)}{\sqrt{V_i(a)}+\eps}.
\end{equation}
If $S(a)<0$, then
the Adam update $x\mapsto x-\alpha_t h_t$ has positive net drift toward $x=1$.

\begin{lemma}
There exists $d\in(0,1)$ such that, for all $0\leq\delta\leq d$, $a\colonequals2+\delta$ satisfies
\begin{equation}\label{eq:delta-choice}
    S(a)<0,
    \qquad
    M_1(a)>0,
    \qquad
    M_2(a)<0,
    \qquad
    M_3(a)<0.
\end{equation}
\end{lemma}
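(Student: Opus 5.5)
The plan is to check all four inequalities in \eqref{eq:delta-choice} at $a=2$ (that is, $\delta=0$), where the closed forms \eqref{eq:M-formulas} and \eqref{eq:V-formulas} from Lemma \ref{lemma1} factor nicely. Then I will extend them to $a=2+\delta$ for small $\delta$ by continuity.

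\emph{Step 1: signs of the $M_i(2)$.} Factoring the numerators in \eqref{eq:M-formulas} at $a=2$ gives
\[
M_1(2)=\frac{(1-b)(2+b)}{1+b+b^2},\qquad M_2(2)=\frac{-(1-b)^2}{1+b+b^2},\qquad M_3(2)=\frac{-(1-b)(1+2b)}{1+b+b^2}.
\]
Since $b\in[0,1)$, these give $M_1(2)>0$, $M_2(2)<0$ and $M_3(2)<0$, all strictly. The case $b=0$ is included: there $M_2(2)=M_3(2)=-1$.

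I also record a key identity. Summing the numerators in \eqref{eq:M-formulas} gives $a(1+b+b^2)-2(1+b+b^2)$, so $M_1(a)+M_2(a)+M_3(a)=a-2$. In particular the $M_i(2)$ sum to zero.

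\emph{Step 2: $S(2)<0$.} Write $D_i\colonequals\sqrt{V_i(2)}+\eps$. From \eqref{eq:V-formulas},
\[
V_1(2)-V_2(2)=\frac{3(1-q)}{1+q+q^2}>0,\qquad V_1(2)-V_3(2)=\frac{3(1-q^2)}{1+q+q^2}>0.
\]
Hence $D_1>D_2>0$ and $D_1>D_3>0$. The denominators $D_i$ are positive because each numerator in \eqref{eq:V-formulas} is at least $1$, even when $\eps=0$.

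Since $M_2(2),M_3(2)<0$, the stronger denominator inequalities give $M_i(2)/D_i<M_i(2)/D_1$ for $i=2,3$, strictly. Therefore
\[
S(2)<\frac{M_1(2)+M_2(2)+M_3(2)}{D_1}=0.
\]

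\emph{Step 3: continuity.} Each $M_i(a)$ is a polynomial in $a$. Each $V_i(a)$ is a polynomial in $a$ that stays at least $1$, so $\sqrt{V_i(a)}+\eps$ is continuous and bounded below by a positive constant. Hence $S(a)$ is continuous in $a$ on a neighborhood of $2$.

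All four inequalities in \eqref{eq:delta-choice} are strict at $a=2$. So there is some $d\in(0,1)$ such that all four still hold for every $a=2+\delta$ with $0\leq\delta\leq d$.

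The only real content is Step 2. The naive approach of bounding each term separately fails, because the positive and negative parts of $S(2)$ cancel exactly at the level of the $M_i$. The observation that makes it work is that the one positive term $M_1$ carries the largest denominator $V_1$, since the large gradient $a$ enters $v_t$ at the same step as $m_1$.
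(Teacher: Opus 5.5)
Your proposal is correct and matches the paper's proof essentially step for step. You factor $M_i(2)$ to get the signs, use the cancellation $M_1(2)+M_2(2)+M_3(2)=0$, and use $V_1(2)>V_2(2)$, $V_1(2)>V_3(2)$ to get $S(2)<0$; your termwise bound $M_i(2)/D_i<M_i(2)/D_1$ for $i=2,3$ is the paper's rewriting $S(2)=A(1/D_1-1/D_2)+B(1/D_1-1/D_3)$ with $A=-M_2(2)$, $B=-M_3(2)$. You then pass to $a=2+\delta$ by continuity in $a$, and choosing $d\in(0,1)$ does implicitly what the paper does by replacing $d$ with $\min(d,0.9)$.
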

\begin{proof}
Let $a=2$.  Since $b\in[0,1)$, we have by \eqref{eq:M-formulas} that
\begin{equation}\label{eq:M-at-two}
    M_1(2)=\frac{(1-b)(2+b)}{1+b+b^2}>0,
    \quad
    M_2(2)=-\frac{(1-b)^2}{1+b+b^2}<0,
    \quad
    M_3(2)=-\frac{(1-b)(1+2b)}{1+b+b^2}<0.
\end{equation}
Moreover,
\begin{equation}\label{msum}
    M_1(2)+M_2(2)+M_3(2)=0.
\end{equation}
For the second moment, we have by \eqref{eq:V-formulas}
\begin{equation}\label{eq:V-at-two}
    V_1(2)=\frac{4+q+q^2}{1+q+q^2},
    \qquad
    V_2(2)=\frac{1+4q+q^2}{1+q+q^2},
    \qquad
    V_3(2)=\frac{1+q+4q^2}{1+q+q^2}.
\end{equation}
Since $q<1$,
\[
    V_1(2)-V_2(2)
    \stackrel{\eqref{eq:V-at-two}}{=}\frac{3(1-q)}{1+q+q^2}>0,
\qquad
    V_1(2)-V_3(2)
    \stackrel{\eqref{eq:V-at-two}}{=}
    \frac{3(1-q^2)}{1+q+q^2}>0.
\]
Therefore
\begin{equation}\label{vineq}
    \sqrt{V_1(2)}+\eps>\sqrt{V_2(2)}+\eps,
    \qquad
    \sqrt{V_1(2)}+\eps>\sqrt{V_3(2)}+\eps.
\end{equation}
Let
\begin{equation}\label{abineq}
    A\colonequals-M_2(2)\stackrel{\eqref{eq:M-at-two}}{>}0,
    \qquad
    B:=-M_3(2)\stackrel{\eqref{eq:M-at-two}}{>}0.
\end{equation}
Since $M_1(2)=A+B$ by \eqref{msum}, we obtain
\begin{equation}\label{s2ineq}
\begin{aligned}
    S(2)
    &\stackrel{\eqref{eq:S-def}}{=}\frac{A+B}{\sqrt{V_1(2)}+\eps}
      -\frac{A}{\sqrt{V_2(2)}+\eps}
      -\frac{B}{\sqrt{V_3(2)}+\eps} \\
    &=A\left(
        \frac1{\sqrt{V_1(2)}+\eps}
        -\frac1{\sqrt{V_2(2)}+\eps}
      \right)
      +B\left(
        \frac1{\sqrt{V_1(2)}+\eps}
        -\frac1{\sqrt{V_3(2)}+\eps}
      \right)
    \stackrel{\eqref{vineq}\wedge\eqref{abineq}}{<}0.
\end{aligned}
\end{equation}
This strict negativity holds for every $b,q\in[0,1)$ and every $\eps\ge0$.

By continuity of $S(a)$ via \eqref{eq:S-def}, \eqref{eq:M-formulas} and \eqref{eq:V-formulas}, there exists $d>0$
such that \eqref{eq:delta-choice} holds for all $0\leq\delta\leq d$.  Replacing $d$ by $\min(d,.9)$ completes the proof.
\end{proof}

\section{A projection lemma}\label{secproj}

The net drift result of Section \ref{secdrift} does not immediately apply to Adam, due to the projection term $\Pi_{[-1,1]}$ in \eqref{adamdef}.  In this section, we therefore analyze this projection term applied thrice.

\begin{lemma}\label{lem:projection}
Let $P=\proj_{[-1,1]}$ so $P(x)=-1_{\{x<-1\}}+x1_{\{-1\leq x\leq 1\}}+1_{\{x>1\}}$ for all $x\in\R$.  If $u_1\le0$, $u_2\ge0$, $u_3\ge0$, and $U\colonequals u_1+u_2+u_3$, then
for every $x\in[-1,1]$,
\begin{equation}\label{peq}
    P\bigl(P(P(x+u_1)+u_2)+u_3\bigr)\ge P(x+U).
\end{equation}
Consequently, if $U\ge c>0$, then
\begin{equation}\label{peq2}
    P\bigl(P(P(x+u_1)+u_2)+u_3\bigr)\ge \min(1,x+c).
\end{equation}
\end{lemma}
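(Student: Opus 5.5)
The plan is to use two elementary properties of $P=\proj_{[-1,1]}$: $P$ is nondecreasing, and for any $y\in\R$ and $u\ge0$ we have $P(P(y)+u)\ge P(y+u)$. The second property is checked by cases. If $y\ge-1$, then $P(y)\ge y$ fails only when $y>1$; in that case $P(y)+u=1+u\ge1$, so the left side equals $1\ge P(y+u)$. If $-1\le y\le 1$, both sides coincide. If $y<-1$, then $P(y)=-1>y$, so $P(y)+u> y+u$, and monotonicity gives the claim. In short, clipping at the bottom only helps a subsequent nonnegative push, while clipping at the top leaves us at $1$, which is the maximum anyway.

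For the first step, set $y_1\colonequals x+u_1$. Since $x\ge-1$ and $u_1\le0$, we have $y_1\le x\le 1$, so $P(y_1)$ clips only from below and $P(y_1)\ge y_1$. Applying the key property with $u=u_2\ge0$ gives $P(P(y_1)+u_2)\ge P(y_1+u_2)$. Adding $u_3$, applying the monotone map $P$, and then the key property once more with $u=u_3\ge0$ gives
\[
P\bigl(P(P(y_1)+u_2)+u_3\bigr)\ge P\bigl(P(y_1+u_2)+u_3\bigr)\ge P(y_1+u_2+u_3)=P(x+U),
\]
which is \eqref{peq}. The order of the signs matters: the negative step comes first, and after it the iterate can only be clipped from below, which is favorable. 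The only thing to verify carefully is the case analysis in the key property, so I expect no real obstacle.

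For \eqref{peq2}, take $U\ge c>0$. By monotonicity, $P(x+U)\ge P(x+c)$. Since $x\ge-1$ and $c>0$, we have $x+c>-1$, so $P(x+c)=\min(1,x+c)$. Combining this with \eqref{peq} finishes the proof.
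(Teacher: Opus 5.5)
Your proof is correct and follows the paper's argument: you use the inequality $P(P(y)+u)\ge P(y+u)$ for $u\ge0$ twice, together with monotonicity of $P$, and then conclude with $P(x+U)\ge P(x+c)=\min(1,x+c)$. One small slip: $y_1\le 1$ follows from $x\le 1$, not from $x\ge -1$; in any case that observation is not needed, since your key inequality holds for every $y\in\R$.
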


\begin{proof}
Let $x\in[-1,1]$.  Since $x\le1$ and $u_1\le0$, we have $x+u_1\le1$, hence
\[
    P(x+u_1)=\max(-1,x+u_1)\ge x+u_1.
\]
Also, for any $z\in\R$ and any $w\ge0$,
\begin{equation}\label{p2}
    P(P(z)+w)\ge P(z+w).
\end{equation}
Indeed, if $z\le1$, then $P(z)\ge z$, and the claim follows from monotonicity of $P$; if
$z>1$, both sides are equal to $1$ since $P(z)=1$ and $w\geq0$.  Applying \eqref{p2} twice (using $u_2, u_3\geq0$)
\[
    P\bigl(P(P(x+u_1)+u_2)+u_3\bigr)
    \ge P(x+u_1+u_2+u_3)=P(x+U).
\]
So \eqref{peq} holds.  Now, assume $U\geq c>0$.  Then monotonicity of $P$ gives
\[
    P(x+U)\ge P(x+c)=\min(1,x+c),
\]
where the last equality uses $x\in[-1,1]$ so that $x+c\ge-1$.
\end{proof}

\section{Main theorem}\label{secmain}

We now prove Theorem \ref{thm:main}, restated as Theorem \ref{thm:all-betas} below.

\begin{theorem}\label{thm:all-betas}
Fix $b,q\in[0,1)$, $\eps\ge0$, and
$\alpha_t$ satisfying $\lim_{t\to\infty}\alpha_t=0$, $\lim_{t\to\infty}\alpha_{t+1}/\alpha_t=1$ and $\sum_{t=1}^{\infty}\alpha_t=\infty$.  There exists
$\delta>0$, depending only on $b,q$ and $\eps$, such that for every initial point
$x_1\in[-1,1]$, projected Adam on $[-1,1]$ applied to
\begin{equation}\label{ftdef}
    f_{3k+1}(x)=(2+\delta)x,
    \qquad
    f_{3k+2}(x)=-x,
    \qquad
    f_{3k+3}(x)=-x,
\end{equation}
satisfies
\begin{equation}\label{xt1}
    \lim_{t\to\infty}x_t=1.
\end{equation}
For every horizon $T\ge1$, the best fixed comparator in $[-1,1]$ is $x_T^*=-1$, and the
average regret satisfies
\begin{equation}\label{rt1}
    \lim_{T\to\infty}\frac{R_T}{T}=\frac{2\delta}{3}>0.
\end{equation}
\end{theorem}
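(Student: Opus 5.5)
Choose $\delta\in(0,d]$ from the drift lemma of Section~\ref{secdrift}, so that \eqref{eq:delta-choice} holds with $a=2+\delta$. Since the $f_t$ in \eqref{ftdef} are linear, the gradients are exactly \eqref{gkdef}, independent of the iterates. Lemma~\ref{lemma1} therefore gives $m_{3k+i}\to M_i$ and $v_{3k+i}\to V_i$ exponentially fast. Each $V_i>0$, so $h_{3k+i}\to H_i\colonequals M_i/(\sqrt{V_i}+\eps)$; the denominators are also eventually bounded below when $\eps=0$. By \eqref{eq:delta-choice}, $H_1>0$, $H_2<0$, $H_3<0$ and $H_1+H_2+H_3=S(a)<0$. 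Hence there is $k_0$ such that for all $k\ge k_0$ we have $h_{3k+1}>0$, $h_{3k+2}<0$, $h_{3k+3}<0$, and $\sum_i h_{3k+i}\le S(a)/2$.

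Set $u_i\colonequals-\alpha_{3k+i}h_{3k+i}$. Then $u_1\le0\le u_2,u_3$, which is exactly the sign pattern required by Lemma~\ref{lem:projection}. Their sum is
\[
U_k=-\alpha_{3k+1}\sum_i h_{3k+i}-\sum_{i=2}^3(\alpha_{3k+i}-\alpha_{3k+1})h_{3k+i}.
\]
Because $\alpha_{t+1}/\alpha_t\to1$ and the $h$'s are bounded, the second term is $o(\alpha_{3k+1})$. Thus for large $k$,
\[
U_k\ge c_k\colonequals \alpha_{3k+1}|S(a)|/4>0.
\]
Applying \eqref{peq2} with $x=x_{3k+1}$ gives $x_{3k+4}\ge\min(1,x_{3k+1}+c_k)$. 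Since $\alpha_t$ is eventually comparable along the subsequence $t=3k+1$ (again by the ratio condition), $\sum_t\alpha_t=\infty$ forces $\sum_k c_k=\infty$. If $x_{3k+1}$ stayed below $1-\eta$ forever, it would increase by $c_k$ each period and so exceed $1$, a contradiction. So it hits $[1-\eta,1]$. Once $x_{3k+1}\ge1-\eta$, the bound gives $x_{3k+4}\ge\min(1,1-\eta+c_k)\ge1-\eta$, so it stays there. Hence $x_{3k+1}\to1$. The intermediate iterates differ from $x_{3k+1}$ by at most $\alpha_{3k+1}|h_{3k+1}|+\alpha_{3k+2}|h_{3k+2}|\to0$, because $\alpha_t\to0$ and projection is $1$-Lipschitz. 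This gives \eqref{xt1}.

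For the regret, $\sum_{t=1}^T f_t(x)=(\lfloor T/3\rfloor\delta+r_T)x$, where $r_T\in\{0,2+\delta,1+\delta\}$ depending on $T\bmod 3$. This coefficient is nonnegative for every $T$, so $x=-1$ is a minimizer, with minimum value $-(T\delta/3+O(1))$. Also
\[
\sum_{t\le T}f_t(x_t)=\sum_{t\le T}g_tx_t=\sum_{t\le T}g_t+\sum_{t\le T}g_t(x_t-1).
\]
The first sum is $T\delta/3+O(1)$. The second is $o(T)$, since $|g_t|\le3$ and $x_t\to1$ (Cesàro averaging). Therefore $R_T=2T\delta/3+o(T)$, which proves \eqref{rt1}.

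The main obstacle is the step-size bookkeeping in the second paragraph. Lemma~\ref{lem:projection} needs a uniform positive lower bound on $U_k$, but the three steps use different $\alpha_t$. This is precisely where $\alpha_{t+1}/\alpha_t\to1$ is used, together with the strict margin $S(a)<0$ and the exact sign pattern of the $h$'s.
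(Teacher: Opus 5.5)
Your proposal is correct and follows essentially the same route as the paper. Both arguments use the eventual sign pattern of the limiting $h_{3k+i}$, apply the three-step projection lemma once per period with $U_k\ge c\,\alpha_{3k+1}$ (via $\alpha_{t+1}/\alpha_t\to1$), use divergence of $\sum_k\alpha_{3k+1}$, bound within-period moves by $O(\alpha_t)$, and get the regret from the positive cumulative slope $G_T$. The only cosmetic difference is that you show $x_{3k+1}$ eventually stays in $[1-\eta,1]$ for each $\eta>0$, whereas the paper observes that it actually equals $1$ after finitely many periods.
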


\begin{proof}
Choose $\delta>0$ so that \eqref{eq:delta-choice} holds.  Let $a\colonequals2+\delta$.
Since the $m_{3k+i}$ and $v_{3k+i}$ terms converge as $k\to\infty$ by Lemma \ref{lemma1} $\forall$ $1\leq i\leq3$,  \eqref{adamdef} implies that the $h_{3k+i}$ terms also converge:
\begin{equation}\label{hidef}
    \lim_{k\to\infty}h_{3k+i}=H_i\colonequals\frac{M_i(a)}{\sqrt{V_i(a)}+\eps},
    \qquad \forall\,i=1,2,3.
\end{equation}
By \eqref{eq:delta-choice},
\begin{equation}\label{hineq}
    H_1>0,
    \qquad
    H_2<0,
    \qquad
    H_3<0,
    \qquad
    H_1+H_2+H_3=S(a)<0.
\end{equation}
Let
$
    \eta\colonequals-S(a)>0.
$
Define the three unprojected increments
\begin{equation}\label{ukdef}
    u_{k,i}\colonequals-\alpha_{3k+i}h_{3k+i},\qquad\forall\,k\geq0,
    \qquad i=1,2,3.
\end{equation}
Then for all $k$ sufficiently large, \eqref{hineq} and \eqref{hidef} imply
\begin{equation}\label{ukineq}
    u_{k,1}<0,
    \qquad
    u_{k,2}>0,
    \qquad
    u_{k,3}>0.
\end{equation}
Furthermore, using $\lim_{t\to\infty}\alpha_{t+1}/\alpha_t=1$,
\[
    U_k\colonequals u_{k,1}+u_{k,2}+u_{k,3}
    \stackrel{\eqref{ukdef}}{=}-\sum_{i=1}^3 \alpha_{3k+i}h_{3k+i} 
    \stackrel{\eqref{hidef}}{=}-\alpha_{3k+1}\cdot\left(\sum_{i=1}^3 H_i+o_k(1)\right)
      =\alpha_{3k+1}(\eta+o_k(1)).
\]
Consequently, there are constants $c>0$ and $k_0\ge1$ such that
\begin{equation}\label{eq:positive-period-drift}
    U_k\ge c\cdot \alpha_{3k+1}
    \qquad\text{for all } k\ge k_0.
\end{equation}

Let $z_k\colonequals x_{3k+1}$ be the iterate at the start of a period, for all $k\geq0$.  Applying Lemma~\ref{lem:projection} with $u_i=u_{k,i}$ for each $1\leq i\leq 3$ which is valid by \eqref{ukineq}
and \eqref{eq:positive-period-drift},
\begin{equation}\label{eq:period-recursion}
    z_{k+1}\stackrel{\eqref{adamdef}\wedge\eqref{ukdef}}{\ge} \min\Big(1,z_k+c\cdot\alpha_{3k+1}\Big),\qquad\forall\,k\geq k_0.
\end{equation}
Since $z_k\le1$ for all $k\geq1$ and $\sum_{k\ge k_0}\alpha_{k}=\infty$, we have $\sum_{k\geq k_0}\alpha_{3k+1}=\infty$, which follows since $\lim_{k\to\infty}\alpha_{k+1}/\alpha_{k}=1$. Then iterating
\eqref{eq:period-recursion} forces $z_k$ to equal $1$ after finitely many periods.  Indeed,
once $\sum_{k=k_0}^{k_1}c\cdot\alpha_{3k+1}>1-z_{k_0}$ for some $k_1>k_0$, \eqref{eq:period-recursion} gives $z_{k+1}\ge1$,
while projection onto $[-1,1]$ from \eqref{adamdef} gives $z_{k+1}\le1$.  Then $\forall$ $k>k_1$, \eqref{eq:period-recursion} gives
$z_{k+1}=1$ whenever $z_k=1$.

This implies convergence of the full sequence $(x_t)$ to $1$, since the within-period moves have magnitude $O(\alpha_{t})$.  To see this, note by \eqref{adameq} that
$(m_t)$ is bounded since $(g_t)$ is bounded, i.e. $|m_t|\leq \max(|m_{t-1}|,|g_t|)\leq3$ for all $t\geq0$.  Also, since $|g_t|\ge1$ for every $t$,
\[
    v_t\stackrel{\eqref{adameq}}{=}
    qv_{t-1}+(1-q)g_t^2\stackrel{\eqref{adameq}}{\ge} 1-q>0.
\]
Hence
\[
    |h_t|\stackrel{\eqref{adamdef}}{=}\Big|\frac{m_t}{\sqrt{v_t}+\eps}\Big|\leq\frac{3}{\sqrt{1-q}},\qquad\forall\,t\geq1.
\]
Therefore $|x_{t+1}-x_t|\leq |\alpha_t|\frac{3}{\sqrt{1-q}}$, $\forall$ $t\geq1$.  Since $x_{3k+1}\to1$ as $k\to\infty$ and $\lim_{t\to\infty}\alpha_{t}=0$, this implies that $x_t\to1$ as $t\to\infty$.  That is, \eqref{xt1} holds.

It remains to prove \eqref{rt1}.  For any $T\ge1$, let
\begin{equation}\label{gtdef}
    G_T\colonequals\sum_{t=1}^T g_t.
\end{equation}
Writing $T=3k+r$, $r\in\{0,1,2\}$, gives (recalling $a=2+\delta$)
\begin{equation}\label{gtobs}
    G_T\stackrel{\eqref{gkdef}}{=}
    \begin{cases}
        k\delta, & r=0,\\
        k\delta+a, & r=1,\\
        k\delta+a-1, & r=2.
    \end{cases}
\end{equation}
All three quantities are positive since $a=2+\delta$ and $\delta>0$.  Therefore the best fixed comparator is
always $x_T^*=-1$, and
\begin{equation}\label{mindo}
    \min_{x\in[-1,1]}\sum_{t=1}^T f_t(x)
    \stackrel{\eqref{ftdef}\wedge\eqref{gtdef}}{=}G_T\cdot\min_{x\in[-1,1]}x =-G_T.
\end{equation}
The regret is then
\begin{equation}\label{rtid}
R_T\stackrel{\eqref{regretdef}\wedge\eqref{ftdef}\wedge\eqref{mindo}}{=}\sum_{t=1}^T g_t x_t+G_T
    \stackrel{\eqref{gtdef}}{=}\sum_{t=1}^T g_t\cdot(x_t+1).
\end{equation}
Since $\lim_{t\to\infty}x_t =1$ by \eqref{xt1} and $(g_t)$ is bounded by \eqref{gkdef},
\[
    \lim_{T\to\infty}\frac1T\sum_{t=1}^T g_t\cdot(x_t-1)=0.
\]
Also $\lim_{T\to\infty}G_T/T=\delta/3$ by \eqref{gtobs}.  Hence
\[
    \frac{R_T}{T}
    \stackrel{\eqref{rtid}\wedge\eqref{gtdef}}{=}\frac{2G_T}{T}+\frac1T\sum_{t=1}^T g_t(x_t-1)
    \to\frac{2\delta}{3},\qquad\text{ as }T\to\infty.
\]
This proves \eqref{rt1} and completes the proof.
\end{proof}

\section{AdamW}\label{secadamw}

We now define AdamW and extend Theorem \ref{thm:main} to AdamW.

\begin{definition}[\textbf{AdamW Optimization Method}]\label{adamwdef}
Let $x_1\in[-4,-2]$ be arbitrary.  Define $x_{2},x_{3},\ldots\in[-4,-2]$ as follows.
The first and second moment recursions of AdamW~\cite{los19} with parameters $b=\beta_1$ and $q=\beta_2$ are the same as Adam, i.e. \eqref{adameq} holds,
with the standard initialization $m_0=v_0=0$.  The projected update then adds a single extra term $\lambda\geq0$ to \eqref{adamdef} as follows
\begin{equation}\label{adamweq}
    x_{t+1}
    \colonequals\proj_{[-4,-2]}\left((1-\lambda\alpha_t)x_t-\alpha_t h_t\right),
    \qquad
    h_t\colonequals\frac{m_t}{\sqrt{v_t}+\eps},\qquad\forall\,t\geq1.
\end{equation}
\end{definition}

\begin{theorem}[AdamW Counterexample]
Let $\alpha_t>0$ satisfy
$\lim_{t\to\infty}\alpha_t=0$, $\lim_{t\to\infty}\frac{\alpha_{t+1}}{\alpha_t}=1$ and $\sum_{t=1}^{\infty}\alpha_t=\infty$.
Fix $\beta_1,\beta_2\in[0,1)$, $\varepsilon\ge0$ and $\lambda\ge0$.
Consider projected AdamW on the domain $\mathcal F=[-4, -2]$.  Then there exists $\delta>0$, depending only on $b,q,\varepsilon$, such that for
the linear functions \eqref{ftdef}, 
the iterates of AdamW satisfy
\[
    \lim_{t\to\infty}x_t =-2.
\]
However, the best fixed comparator is $x^*=-4$, and
\[
    \lim_{T\to\infty}\frac{R_T}{T}
    =
    \frac{2\delta}{3}.
\]
\end{theorem}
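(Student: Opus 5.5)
The plan is to reduce to the proof of Theorem \ref{thm:all-betas}, treating the weight-decay term as a perturbation. On the domain $[-4,-2]$, the weight decay $-\lambda\alpha_t x_t$ is a \emph{positive} push of size between $2\lambda\alpha_t$ and $4\lambda\alpha_t$, so it only helps the iterates move right toward $-2$. The gradients $g_t=f_t'(x_t)$ still satisfy \eqref{gkdef}, since the $f_t$ are linear and \eqref{ftdef} is defined on all of $\R$. Hence Lemma \ref{lemma1} and the drift lemma of Section \ref{secdrift} apply verbatim. I choose $\delta$ exactly as in \eqref{eq:delta-choice}, which depends only on $b,q,\eps$ and not on $\lambda$. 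Then \eqref{hidef}, \eqref{hineq}, \eqref{ukineq} and \eqref{eq:positive-period-drift} hold unchanged.

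Next I rewrite one AdamW step in the form $x_{t+1}=P'(x_t+w_t)$, where $P'=\proj_{[-4,-2]}$ and $w_t\colonequals-\lambda\alpha_t x_t-\alpha_t h_t$. Since $x_t\in[-4,-2]$, we have $w_t\ge u_{k,i}$, where $u_{k,i}=-\alpha_t h_t$ is as in \eqref{ukdef}. I then prove the translated analogue of Lemma \ref{lem:projection} on $[-4,-2]$; the proof is identical with $1$ replaced by $-2$ and $-1$ by $-4$. Combined with monotonicity of $P'$, this gives
\[
P'(P'(P'(z+w_1)+w_2)+w_3)\ge P'(P'(P'(z+u_1)+u_2)+u_3)\ge \min(-2,z+c\alpha_{3k+1}).
\]
Monotonicity in each increment is clear, since each map $y\mapsto P'(y+w)$ is nondecreasing in both $y$ and $w$. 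So I do not even need the sign structure for the $w$'s; I apply the lemma to the $u$'s and then compare.

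Iterating as in the proof of Theorem \ref{thm:all-betas}, with $\sum\alpha_{3k+1}=\infty$, shows that $z_k=x_{3k+1}$ reaches $-2$ after finitely many periods and stays there. The within-period moves are bounded by $\alpha_t(4\lambda+3/\sqrt{1-q})\to0$, which gives $x_t\to-2$.

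For the regret, the identity \eqref{gtobs} shows $G_T>0$, so the minimizer over $[-4,-2]$ is $x^*=-4$ and $\min_x\sum_t f_t(x)=-4G_T$. Then
\[
R_T=\sum_t g_t(x_t+4)=2G_T+\sum_t g_t(x_t+2),
\]
and the last sum is $o(T)$ because $x_t\to-2$ and $g_t$ is bounded. Hence $R_T/T\to 2\delta/3$.

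The only real care point is the comparison step: checking that the decay term enters as a nonnegative addition to each increment, which relies on $x_t\le-2<0$, and that composition with projections preserves the resulting inequality. I expect this step to go through directly by monotonicity.
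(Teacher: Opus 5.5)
Your proof is correct. It differs from the paper's in where the comparison with weight-decay-free Adam is made.

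The paper compares whole trajectories. For $t\ge T$ with $\lambda\alpha_t\le 1$, it defines the one-step maps $\mathcal A_t(x)=\Pi_{\mathcal F}(x+u_t)$ and $\mathcal W_t(x)=\Pi_{\mathcal F}((1-\lambda\alpha_t)x+u_t)$. It checks that $\mathcal W_t$ is nondecreasing and that $\mathcal W_t\ge\mathcal A_t$. By induction, the AdamW iterates then dominate an auxiliary projected Adam sequence $y_t$ with $y_T=x_T$. Since $y_t\to-2$ by the translated argument of Theorem~\ref{thm:all-betas}, the sandwich $y_t\le x_t\le-2$ gives $x_t\to-2$.

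You instead compare within each period. You restart from the actual $z_k$ and lower the realized increments $w_i$ to $u_i$, which uses only the monotonicity of $(y,w)\mapsto\Pi_{\mathcal F}(y+w)$ in each argument. This puts the weight decay directly into the recursion $z_{k+1}\ge\min(-2,z_k+c\,\alpha_{3k+1})$. What this buys you:
\begin{itemize}
\item You need no auxiliary sequence.
\item You need no restriction $\lambda\alpha_t\le 1$, which the paper uses only to make the AdamW step map monotone in $x$.
\end{itemize}
The cost is a separate within-period bound $\alpha_t(4\lambda+3/\sqrt{1-q})$ to pass from $z_k$ to the full sequence $x_t$.

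In return, the paper's coupling treats the Adam analysis as a black box: any lower bound established for an undecayed trajectory carries over to AdamW. It also gets convergence of the full AdamW sequence from the sandwich, with no step-size estimate for the AdamW iterates themselves.
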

\begin{proof}
Define $u_{k,i}$ from \eqref{ukdef}.  The corresponding
increments for AdamW are then
\[
    u_{k,i}-\lambda\alpha_{3k+i}x_{3k+i}.
\]
On $\mathcal F=[-4,-2]$, we have $x_{3k+i}\le -2$, and hence
\[
    -\lambda\alpha_{3k+i}x_{3k+i}\ge0.
\]

We now adapt the remaining parts of Theorem \ref{thm:all-betas} to AdamW.

Put \(\mathcal F=[-4,-2]\), and for any $t\geq1$, let
\[
u_t\colonequals-\alpha_th_t.
\]
For fixed $t$, $g_t$ is a constant that does not depend on $x_t$, so \((h_t)\) also does not depend on $x_t$.

Since $\lim_{t\to\infty}\alpha_t=0$, we may choose \(T=3K+1\) sufficiently large that
\[
0\le\lambda\alpha_t\le1,
\qquad \forall\,t\ge T.
\]
For any \(t\ge T\), define the one-step maps
\[
\mathcal A_t(x)\colonequals\Pi_{\mathcal F}(x+u_t),
\qquad
\mathcal W_t(x)
 \colonequals\Pi_{\mathcal F}\bigl((1-\lambda\alpha_t)x+u_t\bigr),\qquad\forall\,x\in[-4,-2].
\]
The map \(\mathcal W_t\) is nondecreasing since
\(1-\lambda\alpha_t\ge0\) and \(\Pi_F\) is nondecreasing.
Furthermore, for every \(x\in F\),
\[
(1-\lambda\alpha_t)x+u_t
 =
 x+u_t-\lambda\alpha_tx
 \ge x+u_t,
\]
since \(x\le-2<0\).  Therefore
\begin{equation}\label{waineq}
\mathcal W_t(x)\ge\mathcal A_t(x),
\qquad \forall\,x\in \mathcal F.
\end{equation}

Let \((y_t)_{t\ge T}\) be the auxiliary projected Adam sequence
without weight decay, initialized by
\[
y_T\colonequals x_T,
\qquad
y_{t+1}\colonequals\mathcal A_t(y_t),\qquad\forall\,t\geq T.
\]
The period-three argument from Theorem~\ref{thm:all-betas}, translated from
\([-1,1]\) to \(\mathcal F=[-4,-2]\), gives
$\lim_{t\to\infty}y_t=-2$.  We claim that
\[
x_t\ge y_t,
\qquad \forall\,t\ge T.
\]
This holds at time \(T\) by definition of $y_T$.  If it holds at any $t\geq T$, then
monotonicity of \(\mathcal W_t\) and \eqref{waineq} give
\[
x_{t+1}
 \stackrel{\eqref{adamweq}}{=}\mathcal W_t(x_t)
 \ge\mathcal W_t(y_t)
 \ge\mathcal A_t(y_t)
 =y_{t+1}.
\]
Thus the claim follows by induction.  Since both sequences lie in
\([-4,-2]\), $-2\geq x_t\geq y_t$ and $\lim_{t\to\infty}y_t=-2$ imply that $\lim_{t\to\infty}x_t=-2$.

The cumulative gradient over each period is still $\delta>0$, so the best
fixed comparator on $[-4,-2]$ is the left endpoint $-4$.  Since the iterates
converge to the right endpoint $-2$ and the interval length is $2$, the same
regret computation from \eqref{rt1} gives
$
    \lim_{T\to\infty}\frac{R_T}{T}=\frac{2\delta}{3}.
$
\end{proof}

\section{NAdam}\label{secnadam}

We now define NAdam and extend Theorem \ref{thm:main} to NAdam.
Recall that projected NAdam is defined exactly as in Definition \ref{adamopt}, but instead of the $h_t$ from \eqref{adamdef} we have
$$h_t\colonequals\frac{b m_t+(1-b)g_t}{\sqrt{v_t}+\epsilon},\qquad\forall\,t\geq1.$$

\begin{theorem}[NAdam Counterexample]
Let $\alpha_t>0$ satisfy
$\lim_{t\to\infty}\alpha_t=0$, $\lim_{t\to\infty}\frac{\alpha_{t+1}}{\alpha_t}=1$ and $\sum_{t=1}^{\infty}\alpha_t=\infty$.  Fix $b,q\in[0,1)$, $\varepsilon\ge0$.  Consider projected NAdam on the domain $\mathcal F=[-1,1]$.
Then there exists $\delta>0$, depending only on $b,q,\varepsilon$, such that for
the linear functions \eqref{ftdef}, 
the iterates of NAdam satisfy
\[
    \lim_{t\to\infty}x_t =1.
\]
However, the best fixed comparator is $x^*=-1$, and
\[
    \lim_{T\to\infty}\frac{R_T}{T}
    =
    \frac{2\delta}{3}.
\]
\end{theorem}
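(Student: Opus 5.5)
The plan is to rerun the proof of Theorem~\ref{thm:all-betas} with the Adam direction $h_t$ replaced by the NAdam direction. The moment recursions \eqref{adameq} are unchanged, so Lemma~\ref{lemma1} applies verbatim under \eqref{gkdef}: $m_{3k+i}\to M_i(a)$ and $v_{3k+i}\to V_i(a)$, with $M_i,V_i$ given by \eqref{eq:M-formulas} and \eqref{eq:V-formulas}. Since $g_{3k+i}$ is exactly periodic, the NAdam directions converge as well:
\[
h_{3k+i}\to H^N_i\colonequals\frac{N_i(a)}{\sqrt{V_i(a)}+\eps},\qquad N_i(a)\colonequals bM_i(a)+(1-b)\gamma_i,
\]
where $(\gamma_1,\gamma_2,\gamma_3)=(a,-1,-1)$.

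The key step is the NAdam analogue of the drift lemma of Section~\ref{secdrift}. I would work at $a=2$ and establish two facts.

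\emph{Signs.} By \eqref{eq:M-at-two}, $N_1(2)=bM_1(2)+2(1-b)>0$, while $N_2(2)=bM_2(2)-(1-b)<0$ and $N_3(2)=bM_3(2)-(1-b)<0$. Note that $1-b>0$ ensures strictness even when $b=0$.

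\emph{Cancellation.} Using \eqref{msum} and $\gamma_1+\gamma_2+\gamma_3=0$ at $a=2$, we get $N_1(2)+N_2(2)+N_3(2)=0$.

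Now set $A\colonequals-N_2(2)>0$ and $B\colonequals-N_3(2)>0$, so that $N_1(2)=A+B$. The computation \eqref{s2ineq}, together with \eqref{vineq}, then gives
\[
S^N(2)\colonequals\sum_{i=1}^3 H^N_i\Big|_{a=2}<0 .
\]
Because $N_i$ and $V_i$ depend continuously on $a$, there is $\delta>0$, depending only on $b,q,\eps$, such that at $a=2+\delta$ we still have $S^N(a)<0$, $N_1(a)>0$, $N_2(a)<0$, and $N_3(a)<0$. This $\delta$ may differ from the one chosen for Adam, and I would fix the NAdam one.

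With this in hand, the remainder of the proof of Theorem~\ref{thm:all-betas} transfers unchanged.
\begin{enumerate}
\item For large $k$ the increments $u_{k,i}=-\alpha_{3k+i}h_{3k+i}$ satisfy the sign conditions \eqref{ukineq}.
\item Using $\alpha_{t+1}/\alpha_t\to1$, the period sum satisfies $U_k\ge c\,\alpha_{3k+1}$.
\item Lemma~\ref{lem:projection} then gives the recursion \eqref{eq:period-recursion}. Combined with $\sum_t\alpha_t=\infty$, this forces $x_{3k+1}=1$ eventually.
\item For the within-period steps, $|bm_t+(1-b)g_t|\le 3$ because $|m_t|,|g_t|\le3$, and $v_t\ge1-q$. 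Hence $|h_t|\le 3/\sqrt{1-q}$, so $|x_{t+1}-x_t|=O(\alpha_t)\to0$, and therefore $x_t\to1$.
\item The regret computation \eqref{gtobs}--\eqref{rtid} involves only the functions \eqref{ftdef}, not the optimizer. It therefore yields the comparator $x^*=-1$ and $R_T/T\to 2\delta/3$.
\end{enumerate}

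The only real obstacle is the drift lemma. Specifically, it must be confirmed that the added $(1-b)g_t$ term does not spoil either the exact cancellation of the numerators at $a=2$ or their sign pattern. Both facts follow from the linearity of $N_i$ in $(M_i,\gamma_i)$ together with the corresponding facts \eqref{eq:M-at-two} and \eqref{msum} for $M_i$ and for $\gamma_i$. All other steps are routine transfers of the Adam argument.
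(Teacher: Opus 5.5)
Your proposal is correct and follows essentially the same route as the paper: Lemma~\ref{lemma1} for the unchanged moments, the sign pattern and exact cancellation $\sum_i \bigl(bM_i(2)+(1-b)g_i\bigr)=0$ at $a=2$, the rearrangement \eqref{s2ineq} using \eqref{vineq}, continuity in $a$, and then the projection and regret argument of Theorem~\ref{thm:all-betas} carried over unchanged. The only detail worth adding is to also take $\delta\le1$, so that the bound $|m_t|,|g_t|\le 3$ in your step 4 holds, although any uniform bound on $h_t$ would serve equally well.
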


\begin{proof}
The recursions for $m_t$ and $v_t$ are identical to Adam, so Lemma \ref{lemma1} applies, and the limiting values $M_i,V_i$ from Lemma \ref{lemma1} are unchanged.  For NAdam with $a=2$, the limiting numerators of the $h$ terms are $U_{1},U_{2},U_{3}$ where
$$U_i = b M_{i}(2)+(1-b)g_i,\qquad\forall\,i\in\{1,2,3\}.$$

Since
$
    \sum_{i=1}^3 M_i(2)=0,
$
by \eqref{eq:M-formulas}
and
$
    \sum_{i=1}^3 g_i=0,
$
when $a=2$, 
we have $U_1+U_2+U_3=0$.  Moreover $U_1>0$ and
$U_2,U_3<0$, since both $M_i(2)$ and $g_i$ have these signs by \eqref{eq:delta-choice} and \eqref{gkdef}.  The remaining details follow those of Adam in Theorem \ref{thm:all-betas} with $S_{\mathrm{NAdam}}(a)\colonequals\sum_{i=1}^3 \frac{U_i(a)}{\sqrt{V_i(a)}+\eps}$, $U_{i}(a)\colonequals b M_i(a)+(1-b)g_i$ for all $i\in\{1,2,3\}$.  For example, $S_{\mathrm{NAdam}}(2)<0$ by repeating the proof of \eqref{s2ineq} mutatis mutandis, so $S_{\mathrm{NAdam}}(a)<0$ for all $a\in\R$ near $2$, and so on.
\end{proof}

\section{Adan}\label{secadan}

We define the Adan optimization method \cite{xie24} and extend Theorem \ref{thm:main} to it.  Let
\[
    b\colonequals\beta_1,\qquad q\colonequals\beta_2,\qquad r\colonequals\beta_3,\qquad b,q,r\in[0,1).
\]
Let $g_0\colonequals-1$, $m_{0}=d_0 = n_0\colonequals0$.  For any $t\geq1$, define
\[
    m_t\colonequals bm_{t-1}+(1-b)g_t,
\]
\[
    d_t\colonequals qd_{t-1}+(1-q)(g_t-g_{t-1}),
\]
\[
    n_t\colonequals r n_{t-1}
        +(1-r)\bigl(g_t+q(g_t-g_{t-1})\bigr)^2.
\]
\[
x_{t+1}
    \colonequals\proj_{[-1,1]}\left(x_t-\alpha_t h_t\right),
    \qquad
       h_t
    \colonequals
    \frac{m_t+qd_t}{\sqrt{n_t}+\varepsilon}.
\]
This definition in terms of decay coefficients $b,q,r$ may differ from other definitions.

\begin{theorem}[Adan Counterexample]
Let $\alpha_t>0$ satisfy
$\lim_{t\to\infty}\alpha_t=0$, $\lim_{t\to\infty}\frac{\alpha_{t+1}}{\alpha_t}=1$ and $\sum_{t=1}^{\infty}\alpha_t=\infty$.  Fix $b,q,r\in[0,1)$, $\varepsilon\ge0$.  Consider projected Adan on the domain $\mathcal F=[-1,1]$.
Then there exists $\delta>0$, depending only on $b,q,r,\varepsilon$, such that for
the linear functions \eqref{ftdef}, 
the iterates of Adan satisfy
\[
    \lim_{t\to\infty}x_t =1.
\]
However, the best fixed comparator is $x^*=-1$, and
\[
    \lim_{T\to\infty}\frac{R_T}{T}
    =
    \frac{2\delta}{3}.
\]
\end{theorem}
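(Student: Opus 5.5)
The plan is to copy the proof of Theorem \ref{thm:all-betas} step by step, replacing the pair $(m_t,v_t)$ with the triple $(m_t,d_t,n_t)$. Under \eqref{gkdef}, the convention $g_0=-1$ makes the differences $e_t\colonequals g_t-g_{t-1}$ exactly $3$-periodic: $e_{3k+1}=a+1$, $e_{3k+2}=-a-1$, $e_{3k+3}=0$. The squared inputs
\[
w_t\colonequals\bigl(g_t+q e_t\bigr)^2
\]
are then also $3$-periodic, with $w_{3k+1}=(a+q(a+1))^2$, $w_{3k+2}=(1+q(a+1))^2$, $w_{3k+3}=1$.

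\textbf{Steady states.} Each of the three recursions is affine with a periodic input and contraction factor $b^3$, $q^3$ or $r^3$ per period, and none depends on $x_t$. Exactly as in Lemma \ref{lemma1}, there are unique period-three fixed points $(M_i)$, $(D_i)$, $(N_i)$, and the iterates converge to them geometrically. Summing the fixed-point relations $D_i=qD_{i-1}+(1-q)e_i$ over one period gives $\sum_i D_i=\sum_i e_i=0$ for every $a$. The same computation gives $\sum_i M_i=a-2$. Hence the limiting numerators $P_i\colonequals M_i+qD_i$ satisfy $\sum_i P_i=0$ at $a=2$. Moreover $n_t\ge(1-r)\min_t w_t\ge(1-r)>0$, because $w_{3k+3}=1$ and the other two inputs are at least $1$. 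So $h_t$ is bounded, and $h_{3k+i}\to H_i\colonequals P_i/(\sqrt{N_i}+\eps)$.

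\textbf{Drift.} The next step is to show $S_{\mathrm{Adan}}(2)\colonequals\sum_i H_i<0$, together with the sign pattern $H_1>0$, $H_2<0$, $H_3<0$. Continuity in $a$ then extends both to $a=2+\delta$, and Lemma \ref{lem:projection} and the remainder of the proof of Theorem \ref{thm:all-betas}, including the regret computation, go through verbatim.

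For the signs at $a=2$, I would write $P_i$ explicitly as rational functions of $b,q$, as in \eqref{eq:M-formulas}. $D_1>0$ is immediate. The first obstacle is that $D_2$ and $D_3$ need not share the signs of $M_2$ and $M_3$, so each sign has to be checked from the explicit formulas.

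The main obstacle is the analogue of \eqref{vineq}. A direct computation gives
\[
N_1-N_2=\frac{(1-r)\bigl(w_1-(1+r)w_2+rw_3\bigr)}{1-r^3}
\]
at $a=2$, where $w_1=(2+3q)^2$, $w_2=(1+3q)^2$, $w_3=1$. This equals a positive multiple of $(3+6q)-r(6q+9q^2)$, which is negative when $q$ and $r$ are both near $1$. So the clean argument of \eqref{s2ineq} ($N_1$ the largest denominator, combined with $\sum P_i=0$) fails on part of the parameter range. I would first try to salvage it in the form
\[
S_{\mathrm{Adan}}(2)=-P_2\Bigl(\tfrac1{\sqrt{N_2}+\eps}-\tfrac1{\sqrt{N_1}+\eps}\Bigr)-P_3\Bigl(\tfrac1{\sqrt{N_3}+\eps}-\tfrac1{\sqrt{N_1}+\eps}\Bigr),
\]
using $P_1=-P_2-P_3$. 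I would then show that whenever $N_2>N_1$, the weight $-P_2$ is small enough relative to $-P_3$ that the second term dominates. This should be plausible, since $N_3<N_1$ always holds, as $w_3=1$ is the smallest input. If that comparison cannot be proved uniformly, the fallback is to redo the drift lemma at a base slope other than $2$ where the sum of the numerators still vanishes. That would require changing the statement's $\delta$ to a perturbation of that base, so I would try hard to make the first route work.
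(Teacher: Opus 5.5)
Your framework matches the paper's: the period-three steady states $M_i,D_i,N_i$, the identity $\sum_iP_i=0$ at $a=2$, the fact that $N_1>N_3$ always, the correct diagnosis that $N_2>N_1$ occurs when $q,r$ are near $1$, and the split of the drift into a $P_2$-term and a $P_3$-term. The gap is the one step that is genuinely new for Adan: showing $S_{\mathrm{Adan}}(2)<0$ when $N_2>N_1>N_3$. You leave this at ``I would then show'' and ``this should be plausible'', and it is the whole content of the theorem beyond the Adam case.

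Write $\phi(x)=1/(\sqrt{x}+\varepsilon)$. The correct identity is $S_{\mathrm{Adan}}(2)=P_2\bigl(\phi(N_2)-\phi(N_1)\bigr)+P_3\bigl(\phi(N_3)-\phi(N_1)\bigr)$; your display has the opposite overall sign. So you need the negative $P_3$-term to beat the positive $P_2$-term. Your suggested reason, that $-P_2$ is small relative to $-P_3$, is not the mechanism. For $b=0$ one finds $(-P_2)-(-P_3)=3q(1-q)^2/(1+q+q^2)>0$, while whether $N_2>N_1$ depends only on $q,r$. So the weights alone cannot decide the sign.

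The paper closes this with three quantitative ingredients.

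First, $-\phi'$ is positive and decreasing, so for $N_3<N_1<N_2$,
\[
\frac{\phi(N_3)-\phi(N_1)}{\phi(N_1)-\phi(N_2)}=\frac{\int_{N_3}^{N_1}-\phi'(x)\,dx}{\int_{N_1}^{N_2}-\phi'(x)\,dx}\ \ge\ \frac{N_1-N_3}{N_2-N_1}.
\]

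Second, take your inputs $w_1=(2+3q)^2$, $w_2=(1+3q)^2$, $w_3=1$. A direct computation gives $q(N_1-N_3)-(N_2-N_1)=\frac{3(1-r)}{1+r+r^2}\bigl(1+3q+4q^2+3q^3-r(q+q^2)\bigr)>0$. Hence the last ratio exceeds $1/q$. Note $q>0$ here, since $N_2>N_1$ forces $r(6q+9q^2)>3+6q$.

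Third, the explicit formulas give $-M_2/(-M_3)=(1-b)/(1+2b)\le1\le1/q$ and $(-qD_2)/(-qD_3)=1/q$. Therefore $-P_2\le(-P_3)/q$.

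Chaining these gives $(-P_2)\bigl(\phi(N_1)-\phi(N_2)\bigr)<(-P_3)\bigl(\phi(N_3)-\phi(N_1)\bigr)$, i.e.\ $S_{\mathrm{Adan}}(2)<0$.

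Two smaller points. Your sign worry is unfounded: $D_2=-3(1-q)/(1+q+q^2)<0$ and $D_3=qD_2\le0$ share the signs of $M_2,M_3$, so $P_1>0>P_2,P_3$ at once. Your fallback cannot exist: you computed $\sum_iP_i=a-2$, which vanishes only at $a=2$.
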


\begin{proof}
Since $g_t$ does not depend on $x_t$, three Adan iterations for $m_t$ or $d_t$ or $n_t$ results in a contractive mapping.  For example, $m_{3k+1}\mapsto m_{3(k+1)+1}$ corresponds to a map $M\mapsto b^{3}M+\text{constant}$.  Since this map is a contraction from $\R$ to $\R$, it has a unique fixed point, by the contractive mapping theorem.  Thus, $\lim_{k\to\infty}(m_{3k+1},m_{3k+2},m_{3k+3})\equalscolon(M_1, M_2, M_3)$, $\lim_{k\to\infty}(n_{3k+1},n_{3k+2},n_{3k+3})\equalscolon(N_1, N_2, N_3)$, $\lim_{k\to\infty}(d_{3k+1},d_{3k+2},d_{3k+3})\equalscolon(D_1, D_2, D_3)$.

The first-moment values from \eqref{eq:M-at-two} are
\begin{equation}\label{mdef2}
    M_1=\frac{2-b-b^2}{1+b+b^2},
    \quad
    M_2=\frac{2b-b^2-1}{1+b+b^2},
    \quad
    M_3=\frac{2b^2-b-1}{1+b+b^2}.
\end{equation}
The gradient-difference values are
\begin{equation}\label{ddef}
    D_1=\frac{3(1-q^2)}{1+q+q^2},
    \quad
    D_2=-\frac{3(1-q)}{1+q+q^2},
    \quad
    D_3=-\frac{3q(1-q)}{1+q+q^2}.
\end{equation}
Thus the limiting Adan numerators are
\begin{equation}\label{udef}
    U_i\colonequals M_i+qD_i,\qquad\forall\,i\in\{1,2,3\}.
\end{equation}
From \eqref{mdef2}, \eqref{ddef} and $q\geq0$, they satisfy
\[
    U_1>0,\qquad U_2<0,\qquad U_3<0,
    \qquad
    U_1+U_2+U_3=0.
\]

Write
\[
    A\colonequals-U_2>0,\qquad B\colonequals-U_3>0,
\]
so that \(U_1=A+B\).  Set
\[
    s_1\colonequals(2+3q)^2,\qquad s_2\colonequals(1+3q)^2,\qquad s_3\colonequals 1.
\]
Then
\[
    N_1=\frac{s_1+r^2s_2+rs_3}{1+r+r^2},
\qquad
    N_2=\frac{rs_1+s_2+r^2s_3}{1+r+r^2},
\qquad
    N_3=\frac{r^2s_1+rs_2+s_3}{1+r+r^2}.
\]
Define
\begin{equation}\label{wdef}
    w_i\colonequals\frac{1}{\sqrt{N_i}+\varepsilon},\qquad\forall\,i\in\{1,2,3\}.
\end{equation}
The limiting drift of three Adan iterations at \(a=2\) is
\begin{equation}\label{sadan2def}
    S_{\rm Adan}(2)
    =
    \sum_{i=1}^3 U_iw_i
    =
    A(w_1-w_2)+B(w_1-w_3).
\end{equation}

We always have \(N_1>N_3\) since $N_1 - N_3=(1-r)[(s_1 - s_3) +r(s_1 - s_2)]/(1+r+r^2)>0$.  If \(N_1\ge N_2\), then
\(w_1\le w_2\) and \(w_1<w_3\), so \(S_{\rm Adan}(2)<0\).

It remains to consider the case \(N_2>N_1>N_3\).  (In the case $q=0$, we have $s_1=4,s_2=1$, and $N_1 - N_2=3(1-r)/(1+r+r^2)>0$, i.e. this case cannot occur when $q=0$, so we may assume $q>0$.)  The function $\phi\colon[0,\infty)\to\R$ defined by
\[
    \phi(x)=\frac1{\sqrt{x}+\varepsilon},\qquad\forall\,x\geq0
\]
%
has \(-\phi'(x)\) positive and decreasing.  Hence using \(N_2>N_1>N_3\)
\begin{equation}\label{wijineq}
    \frac{w_3-w_1}{w_1-w_2}
    \stackrel{\eqref{wdef}}{=}\frac{\int_{N_3}^{N_1}-\phi'(x)dx}{\int_{N_1}^{N_2}-\phi'(x)dx}
    \ge
    \frac{N_1-N_3}{N_2-N_1}.
\end{equation}
A direct calculation gives
\begin{equation}\label{nijineq}
    \frac{N_1-N_3}{N_2-N_1}> \frac1q.
\end{equation}
Indeed, after canceling the common $(1+r+r^2)/(1-r)$ factor, this is
equivalent to
\[
    q\bigl((1+r)s_1-rs_2-s_3\bigr)
    >
    (1+r)s_2-s_1-rs_3,
\]
and the left side minus the right side is equal to
\[
    3\Bigl(3q^3+4q^2+3q+1-r(q^2+q)\Bigr)>0.
\]
%
 
On the other hand,
\[
    \frac{A}{B}\le \frac1q.
\]
This follows since, using $b,q\in[0,1)$ and $q\neq0$
\begin{equation}\label{mineq}
    \frac{-M_2}{-M_3}
    \stackrel{\eqref{mdef2}}{=}\frac{1-b}{1+2b}\le 1< \frac1q,
\end{equation}
while
\begin{equation}\label{dineq}
    \frac{-qD_2}{-qD_3}
    \stackrel{\eqref{ddef}}{=}
    \frac{1}{q}.
\end{equation}
Therefore
\[
    \frac{A}{B}
    =\frac{U_2}{U_3}
    \stackrel{\eqref{udef}}{=}\frac{-M_2 - qD_2}{-M_3 - qD_3}
\stackrel{\eqref{mineq}\wedge\eqref{dineq}}{\leq}
\frac{1}{q}
    \stackrel{\eqref{wijineq}\wedge\eqref{nijineq}}{<}
    \frac{w_3-w_1}{w_1-w_2}.
\]
Rearranging this inequality gives
\[
    A(w_1-w_2)-B(w_3-w_1)<0,
\]
so by \eqref{sadan2def} we get (in all cases) that
\[
    S_{\rm Adan}(2)<0.
\]

By continuity, there exists \(\delta>0\) such that, with \(a=2+\delta\),
the limiting Adan period sum $\sum_{i=1}^3 U_i(a)w_i(a)$ remains negative and the numerator signs remain
\[
    U_1(a)>0,\qquad U_2(a)<0,\qquad U_3(a)<0.
\]
The projection and regret argument from Theorem~\ref{thm:all-betas} then applies
verbatim.  
\end{proof}

\section{AdaMax}\label{secadamax}

We define the AdaMax optimization method and extend Theorem \ref{thm:main} to it. 
The AdaMax optimization method is defined by the following recursions:

$m_t\colonequals b m_{t-1}+(1-b)g_t$, $v_t\colonequals \max(q v_{t-1}, |g_t|)$, $h_t\colonequals\frac{m_t}{v_t + \epsilon}$, $x_{t+1}$ as in \eqref{adamdef}, for all $t\geq1$.

As usual, $m_0=v_0=0$.

\begin{theorem}[AdaMax Counterexample]
Let $\alpha_t>0$ satisfy
$\lim_{t\to\infty}\alpha_t=0$, $\lim_{t\to\infty}\frac{\alpha_{t+1}}{\alpha_t}=1$ and $\sum_{t=1}^{\infty}\alpha_t=\infty$.  Fix $b,q\in[0,1)$, $\varepsilon\ge0$.  Consider projected AdaMax on the domain $\mathcal F=[-1,1]$.
Then there exists $\delta>0$, depending only on $b,q,\varepsilon$, such that for
the linear functions \eqref{ftdef}, 
the iterates of AdaMax satisfy
\[
    \lim_{t\to\infty}x_t =1.
\]
However, the best fixed comparator is $x^*=-1$, and
\[
    \lim_{T\to\infty}\frac{R_T}{T}
    =
    \frac{2\delta}{3}.
\]
\end{theorem}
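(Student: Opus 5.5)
The plan is to repeat the proof of Theorem \ref{thm:all-betas}, changing only the second-moment analysis. The first-moment recursion for $m_t$ is the same as in Adam, so Lemma \ref{lemma1} still gives $m_{3k+i}\to M_i(a)$ with $M_i$ as in \eqref{eq:M-formulas}. At $a=2$ we keep the sign pattern $M_1(2)=A+B>0$, $M_2(2)=-A<0$, $M_3(2)=-B<0$ from \eqref{eq:M-at-two}, \eqref{msum} and \eqref{abineq}. What needs new work is the steady state of $v_t=\max(qv_{t-1},|g_t|)$ under \eqref{gkdef}.

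For AdaMax I expect no contraction argument is needed, because $v_t$ should be exactly periodic from the start. I would first show by induction that $v_t\le a$ for all $t$, since $v_0=0$ and $qa<a$. Next I would compute $v_{3k+1}=\max(qv_{3k},a)=a$. Then $v_{3k+2}=\max(qa,1)$. Finally $v_{3k+3}=\max(q\max(qa,1),1)=\max(q^2a,1)$, using $q<1$. So for every $k\ge0$,
\[
V_1(a)=a,\qquad V_2(a)=\max(qa,1),\qquad V_3(a)=\max(q^2a,1).
\]
Each $V_i$ is continuous in $a$. Since $a\ge2>1$ and $q<1$, we get $V_1>V_2$ and $V_1>V_3$ strictly. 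Also $v_t\ge|g_t|\ge1$ for all $t$, so $|h_t|\le 3/(1+\eps)\le 3$. This bound replaces the bound $|h_t|\le 3/\sqrt{1-q}$ used in Theorem \ref{thm:all-betas}.

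Next, set $w_i\colonequals 1/(V_i(2)+\eps)$, so that $w_1<w_2$ and $w_1<w_3$. The drift is then
\[
S_{\mathrm{AdaMax}}(2)=\sum_{i=1}^3 M_i(2)w_i=A(w_1-w_2)+B(w_1-w_3)<0,
\]
exactly as in \eqref{s2ineq}. The $M_i$ and $V_i$ are continuous in $a$, and $V_i\ge1$. Hence there is $\delta>0$ such that, for $a=2+\delta$, we still have $S_{\mathrm{AdaMax}}(a)<0$, $M_1(a)>0$, $M_2(a)<0$ and $M_3(a)<0$.

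From here the proof of Theorem \ref{thm:all-betas} runs as written. The limits $H_i=M_i/(V_i+\eps)$ have the signs in \eqref{hineq}, so the increments have the signs in \eqref{ukineq}, and we get \eqref{eq:positive-period-drift}. Lemma \ref{lem:projection} then gives \eqref{eq:period-recursion}, so $z_k=1$ after finitely many periods. The bound on $|h_t|$ shows $x_t\to1$. The regret identity \eqref{rtid} then gives $R_T/T\to2\delta/3$.

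The only step that needs care is the exact periodicity of $v_t$, and in particular the claim $qv_{3k}\le a$. This follows from the a priori bound $v_t\le\max_s|g_s|=a$, which also holds by induction. Everything else is routine reuse of the earlier lemmas.
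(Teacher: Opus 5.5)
Your proposal is correct and follows the paper's proof. You reuse Lemma~\ref{lemma1} for the $M_i$, obtain the same steady states $V_1(a)=a$, $V_2(a)=\max(qa,1)$, $V_3(a)=\max(q^2a,1)$, use the same decomposition $S_{\mathrm{AdaMax}}(2)=A(w_1-w_2)+B(w_1-w_3)<0$ and continuity in $a$, and then rerun the argument of Theorem~\ref{thm:all-betas}. The only minor difference is that you get exact $3$-periodicity of $v_t$ from $t=1$ via the a priori bound $v_t\le a$, whereas the paper obtains $V_1=a$ as the unique fixed point of the contraction $V\mapsto\max(q^3V,a)$; your route is slightly sharper, since no limit in $v_t$ is needed.
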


\begin{proof}
The steady-state values $(M_1, M_2, M_3)$ from Lemma \ref{lemma1} apply here as well, since the $m_t$ recursion for AdaMax is the same as Adam.  To find the steady-state $(V_1,V_2,V_3)$ we plug them into their recursion as follows:
$$V_2 = \max(qV_1, 1)$$
$$V_3 = \max(qV_2,1)=\max(q^2 V_1,1)$$
One more iteration gives
$$\max(q^3 V_1, a)$$
So the fixed point of $V\mapsto \max(q^3 V,a)$ must satisfy $V=a$.  (Note that $\Phi(V)\colonequals\max(q^{3}V,a)$ satisfies $|\Phi(V)-\Phi(V')|\leq q^{3}|V-V'|$ for all $V,V'\in\R$ and $0\leq q<1$ implies that $\Phi$ is a contraction, so $\Phi$ has a unique fixed point.)  The corresponding maps for $V_2,V_3$ are also contractions.  Then solving for $V_2,V_3$ gives
\begin{equation}\label{vmaxdef}
V_1(a)=a,\quad V_2(a)=\max(qa,1),\quad V_3(a)=\max(q^2a,1),
\end{equation}
and when $a=2$ we have
$$V_1 = 2,\quad V_2 = \max(2q,1),\quad V_3 = \max(2q^2, 1).$$

Since $q<1$ we therefore have
\begin{equation}\label{vineq2}
    V_1>V_2,\qquad V_1>V_3,
\end{equation}
For any $a\geq2$, define
$S_{\text{AdaMax}}(a)\colonequals\sum_{i=1}^{3}\frac{M_i(a)}{V_i(a)+\epsilon}
$.  We then have
$$S_{\text{AdaMax}}(2)=\sum_{i=1}^{3}\frac{M_i(2)}{V_i(2)+\epsilon}.$$

Let $A,B>0$ as in \eqref{abineq}.  Since $M_1(2)=A+B$ by \eqref{msum}, we obtain
\begin{align*}
    S_{\text{AdaMax}}(2)
    &=\frac{A+B}{V_1(2)+\eps}
      -\frac{A}{V_2(2)+\eps}
      -\frac{B}{V_3(2)+\eps} \\
    &=A\left(
        \frac1{V_1(2)+\eps}
        -\frac1{V_2(2)+\eps}
      \right)
      +B\left(
        \frac1{V_1(2)+\eps}
        -\frac1{V_3(2)+\eps}
      \right)
    \stackrel{\eqref{vineq2}}{<}0.
\end{align*}
This strict negativity holds for every $b,q\in[0,1)$ and every $\eps\ge0$.  The remaining details follow those of Theorem \ref{thm:all-betas}, e.g. observing that $V_1(a),V_2(a),V_3(a)$ are continuous functions of $a$ by \eqref{vmaxdef}, so $S_{\rm AdaMax}(a)<0$ for $a$ near $2$, and so on.
\end{proof}

\section{Muon}\label{secmuon}

We define the Muon optimization method and extend Theorem \ref{thm:main} to it. 
The Muon optimization method is defined for functions of matrices $f_t\colon\mathcal K\to\R$ $\forall$ $t\geq1$ where $\mathcal K\subset\R^{n\times n}$.  The iterations satisfy
\[
    m_t=b m_{t-1}+(1-b)\nabla f_t(x_t),
    \qquad
    x_{t+1}
    =
    \Pi_{\mathcal K}
    \left(
        x_t-\alpha_t\operatorname{Polar}(m_t)
    \right),\qquad\forall\,t\geq1,
\]
with \(m_0=0\), and $\Pi_{\mathcal K}$ denoting the projection to the nearest point in $\mathcal{K}$, with respect to the Euclidean (Frobenius) metric on $\R^{n\times n}$.  Here $m_t, x_{t}\in\R^{n\times n}$ for all $t\geq1$ and
\[
    \operatorname{Polar}(A)\colonequals A(A^* A)^{-1/2}
\]
is defined for any invertible $n\times n$ matrix $A$ \cite{par26}.  More generally, $\operatorname{Polar}(A)\colonequals UV$ when $A$ is an $n\times n$ matrix with reduced singular value decomposition $A=UDV$ (noting that the product $UV$ is well-defined even though $U,V$ are not uniquely determined by $A$).  Also $\operatorname{Polar}(0)\colonequals0$.  In practice, $\operatorname{Polar}(A)$ can be approximated by Newton-Schulz iterations.

In the case $n=1$ with $\mathcal K \colonequals[-1,1]\subset\R$, Muon becomes a signed momentum method:
\[
    m_t=b m_{t-1}+(1-b)f_t'(x_t),
    \qquad
    x_{t+1}
    =
    \Pi_{[-1,1]}
    \left(
        x_t-\alpha_t\mathrm{sign}(m_t)
    \right),\qquad\forall\,t\geq1,
\]
(Here $\mathrm{sign}(0)\colonequals0$.)  We will demonstrate this method has nonzero average regret.  The $n=1$ example can then be extended to the $n>1$ case by choosing each $f_t$ to be a function of one diagonal entry of its input matrix, e.g. $\mathcal K=\{\mathrm{diag}(x,0,\ldots,0)\colon x\in[-1,1]\}$.

\begin{theorem}[Muon Counterexample]
Let $\alpha_t>0$ satisfy
$\lim_{t\to\infty}\alpha_t=0$, $\lim_{t\to\infty}\frac{\alpha_{t+1}}{\alpha_t}=1$ and $\sum_{t=1}^{\infty}\alpha_t=\infty$.  Fix $b\in[0,1)$.  Consider Muon on the domain $\mathcal F=[-1,1]$.
Then there exists $\delta>0$, depending only on $b$, such that for
the linear functions \eqref{ftdef}, 
the iterates of Muon satisfy
\[
    \lim_{t\to\infty}x_t =1.
\]
However, the best fixed comparator is $x^*=-1$, and
\[
    \lim_{T\to\infty}\frac{R_T}{T}
    =
    \frac{2\delta}{3}.
\]
\end{theorem}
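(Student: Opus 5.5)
The plan is to run the proof of Theorem~\ref{thm:all-betas} almost verbatim. In dimension one Muon is projected Adam with $h_t$ replaced by $\mathrm{sign}(m_t)$. The recursion for $m_t$ is the same as in \eqref{adameq}, and by \eqref{gkdef} it does not depend on $x_t$. So Lemma~\ref{lemma1} applies directly: $m_{3k+i}\to M_i(a)$ exponentially fast as $k\to\infty$, with $M_i(a)$ given by \eqref{eq:M-formulas}. There is no second moment and no $\eps$ here, so $\delta$ will depend only on $b$.

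\textbf{Choice of $\delta$ and eventual signs.} Choose $\delta\in(0,1)$ so that the sign conditions in \eqref{eq:delta-choice} hold:
\[
M_1(2+\delta)>0,\qquad M_2(2+\delta)<0,\qquad M_3(2+\delta)<0.
\]
By \eqref{eq:M-at-two} these are strict at $a=2$, and $M_i$ depends continuously on $a$ and only on $b$. Only these sign conditions are needed, not the condition $S(a)<0$. Since each limit $M_i(a)$ is nonzero, for all $k\ge k_0$ we have
\[
\mathrm{sign}(m_{3k+1})=1,\qquad \mathrm{sign}(m_{3k+2})=\mathrm{sign}(m_{3k+3})=-1.
\]
Hence the unprojected increments $u_{k,i}\colonequals-\alpha_{3k+i}\,\mathrm{sign}(m_{3k+i})$ satisfy
\[
u_{k,1}=-\alpha_{3k+1}<0,\qquad u_{k,2}=\alpha_{3k+2}>0,\qquad u_{k,3}=\alpha_{3k+3}>0.
\]

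\textbf{Net drift per period.} Using $\alpha_{t+1}/\alpha_t\to1$,
\[
U_k=u_{k,1}+u_{k,2}+u_{k,3}=-\alpha_{3k+1}+\alpha_{3k+2}+\alpha_{3k+3}=\alpha_{3k+1}\bigl(1+o_k(1)\bigr).
\]
So $U_k\ge c\,\alpha_{3k+1}$ for $k\ge k_0$, say with $c=1/2$; this plays the role of \eqref{eq:positive-period-drift}. Lemma~\ref{lem:projection} then gives
\[
z_{k+1}\ge\min\bigl(1,z_k+c\,\alpha_{3k+1}\bigr),\qquad z_k\colonequals x_{3k+1}.
\]
Because $\sum_k\alpha_{3k+1}=\infty$, the same argument as in Theorem~\ref{thm:all-betas} shows that $z_k=1$ for all large $k$. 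Since every step satisfies $|x_{t+1}-x_t|\le\alpha_t\to0$, the whole sequence converges: $x_t\to1$.

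\textbf{Regret.} The regret computation \eqref{gtdef}--\eqref{rtid} uses only the gradients \eqref{gkdef} and the fact that $x_t\to1$. It therefore applies unchanged: the best fixed comparator is $x^*=-1$ and $R_T/T\to 2\delta/3$.

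\textbf{Possible obstacle.} I expect no real obstacle. The only point needing care is that $\mathrm{sign}(\cdot)$ is discontinuous at $0$, so the argument must use that each $M_i(a)$ is bounded away from zero; this guarantees that $\mathrm{sign}(m_{3k+i})$ is eventually constant for each $i$. The degenerate case $b=0$ causes no difficulty, since then $m_t=g_t$.
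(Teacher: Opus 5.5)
Your proposal is correct and follows essentially the same route as the paper. Both arguments use Lemma~\ref{lemma1} and continuity to get the sign pattern $+1,-1,-1$ for $m_{3k+i}$, a net period increment of $\alpha_{3k+1}(1+o(1))$, and Lemma~\ref{lem:projection} with $c=1/2$ to force $x_t\to1$, and then reuse the Adam regret computation unchanged. Your observation that only the sign conditions (and not $S(a)<0$) are needed is exactly why $\delta$ depends only on $b$.
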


\begin{proof}

By Lemma \ref{lemma1}, the steady-state momentum values are
\[
    M_1(a)=\frac{a-b-b^2}{1+b+b^2},
\qquad
    M_2(a)=\frac{ab-b^2-1}{1+b+b^2},
\qquad
    M_3(a)=\frac{ab^2-b-1}{1+b+b^2}.
\]
At \(a=2\),
\[
    M_1(2)>0,\qquad M_2(2)<0,\qquad M_3(2)<0.
\]
Therefore, by continuity, there exists \(\delta>0\) such that, with
\(a=2+\delta\),
\[
    M_1(a)>0,\qquad M_2(a)<0,\qquad M_3(a)<0.
\]
By Lemma \ref{lemma1}, $(m_{3k+1},m_{3k+2},m_{3k+3})$ converges
exponentially to this period-three steady-state as $k\to\infty$.  Hence, there is some $k_0>0$ such that, for all $k>k_0$,
\[
    \operatorname{sign}(m_{3k+1})=+1,
    \qquad
    \operatorname{sign}(m_{3k+2})=-1,
    \qquad
    \operatorname{sign}(m_{3k+3})=-1.
\]

Thus the three unprojected scalar increments in the \(k\)th period are
\[
    u_{k,1}=-\alpha_{3k+1},
    \qquad
    u_{k,2}=\alpha_{3k+2},
    \qquad
    u_{k,3}=\alpha_{3k+3}.
\]
Since $\lim_{t\to\infty}\alpha_{t+1}/\alpha_t=1$, their sum satisfies
\[
    U_k
    \colonequals
    -\alpha_{3k+1}
    +
    \alpha_{3k+2}
    +
    \alpha_{3k+3}
    =
    \alpha_{3k+1}(1+o(1)).
\]

Denote $z_{k}\colonequals x_{3k+1}$.  Lemma \ref{lem:projection} as used for Adam then gives
\[
    z_{k+1}
    \ge
    \min(1,z_k+\alpha_{3k+1}/2),\qquad\forall\,k\geq k_0.
\]
Since $z_k\le1$ for all $k\geq1$ and $\sum_{k\ge k_0}\alpha_{k}=\infty$, we have $\sum_{k\geq k_0}\alpha_{3k+1}=\infty$, which follows since $\lim_{k\to\infty}\alpha_{k+1}/\alpha_{k}=1$.  So, the sequence
 \((z_k)\) reaches \(1\) at some finite value of $k$.  The within-period moves have size $O(\alpha_{3k})=o(1)$ by assumption, so
\(x_t\to1\) as $t\to\infty$.

Finally, every period has cumulative gradient
\[
    (2+\delta)-1-1=\delta>0.
\]
Hence the best fixed comparator in $[-1,1]$ is $-1$.  Since $\lim_{t\to\infty}x_t =1$, the same scalar regret computation as for Adam gives
\[
    \lim_{T\to\infty}\frac{R_T}{T}
    =
    \frac{2\delta}{3}.
\]
\end{proof}

\appendix
\section{Adam with i.i.d. Slopes}\label{secapp}

We now give a stochastic version of the Adam counterexample from Theorem \ref{thm:main}.  For simplicity, we only consider the step size $\alpha_t\colonequals\alpha/\sqrt{t}$ for all $t\geq1$.  Instead of presenting
the gradients in the deterministic period-three order \(a,-1,-1\), we draw
them independently at each time $t$.  The slope \(a\) appears with probability
\(1/3\), and the slope \(-1\) appears with probability \(2/3\).

More formally, let \((X_t)_{t\ge1}\) be i.i.d. Bernoulli random variables with
\[
    \mathbb P(X_t=1)=\frac13,\qquad
    \mathbb P(X_t=0)=\frac23,\qquad\forall\,t\geq1.
\]
For any \(a>0\), define
\[
    g_t(a)\colonequals-1+(a+1)X_t,\qquad\forall\,t\geq1.
\]
Thus, for any $t\geq1$,
\[
    g_t(a)=a \quad\text{with probability }1/3,
    \qquad
    g_t(a)=-1 \quad\text{with probability }2/3.
\]
The optimized functions are again
\[
    f_t(x)=g_t(a)x,\qquad \forall\,x\in[-1,1],\,t\geq1.
\]

\begin{theorem}[Adam i.i.d. random-slope counterexample]\label{iidthm}
Fix \(b,q\in[0,1)\), \(\varepsilon\ge0\), and \(\alpha>0\).  There exists
\(\delta>0\), depending only on \(b,q,\varepsilon\), such that, with
\[
    a=2+\delta,
\]
projected Adam on \([-1,1]\), driven by the i.i.d. slopes
\[
    g_t(a)=
    \begin{cases}
        a, & \text{with probability }1/3,\\
        -1, & \text{with probability }2/3,
    \end{cases}
\]
satisfies
\[
    \lim_{t\to\infty}x_t=1
    \qquad\text{almost surely}.
\]
Moreover, the best fixed comparator is eventually \(-1\), and the average
regret satisfies
\[
    \lim_{T\to\infty}\frac{R_T}{T}
    =
    \frac{2\delta}{3}
    \qquad\text{almost surely}.
\]
\end{theorem}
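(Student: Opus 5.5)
The approach is to repeat the three-step scheme of Theorem~\ref{thm:all-betas}: identify a limiting drift, show that it is strictly negative at $a=2$, perturb to $a=2+\delta$, and push the iterates to $x=1$ through the projection. The period-three fixed point of Lemma~\ref{lemma1} is replaced by the stationary law of the moment process. Because $g_t$ does not depend on $x_t$, the pair $(m_t,v_t)$ is an exponentially contracting function of the i.i.d. slopes:
\[
m_t=(1-b)\sum_{j=0}^{t-1} b^j g_{t-j},\qquad v_t=(1-q)\sum_{j=0}^{t-1} q^j g_{t-j}^2\ \ge 1-q .
\]
The truncations at $j\le t-1$ differ from the two-sided stationary versions $\bar m_t,\bar v_t$ (built from a bi-infinite i.i.d. extension) by $O(b^t+q^t)$. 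Hence $h_t=\bar h_t+O(b^t+q^t)$, where $\bar h_t\colonequals \bar m_t/(\sqrt{\bar v_t}+\eps)$ is a stationary, ergodic, bounded process (by the bound $|h_t|\le 3/\sqrt{1-q}$ from the proof of Theorem~\ref{thm:all-betas}). The limiting drift is then $\bar S(a)\colonequals \mathbb E\,\bar h_0$.

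\textbf{Step 1 (sign of the drift).} At $a=2$ we have $\mathbb E g=0$, and I would show $\bar S(2)<0$ term by term. Write $\bar S(2)=(1-b)\sum_j b^j\,\mathbb E[g_{-j}\phi(\bar v_0)]$ with $\phi(x)=1/(\sqrt x+\eps)$ strictly decreasing. Condition on all slopes except $g_{-j}$, and let $v^{(a)},v^{(-1)}$ denote the value of $\bar v_0$ when $g_{-j}=2$ or $g_{-j}=-1$. Then
\[
\mathbb E[g_{-j}\phi(\bar v_0)\mid \cdot]=\tfrac23\bigl(\phi(v^{(a)})-\phi(v^{(-1)})\bigr),
\]
and $v^{(a)}-v^{(-1)}=3(1-q)q^j$. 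So this is strictly negative for $q>0$. When $q=0$ only $j=0$ carries the sign, but then $b^0$ still gives a strict negative, and for $j\ge1$ the term is zero by independence. Hence $\bar S(2)<0$ for all $b,q,\eps$.

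This is the i.i.d. analogue of \eqref{s2ineq}. By dominated convergence, using $\bar v\ge1-q$, the map $a\mapsto\bar S(a)$ is continuous. I therefore fix $\delta>0$ with $\bar S(2+\delta)=-\eta<0$.

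\textbf{Step 2 (moving the iterates to $1$; the main obstacle).} The difficulty is that the ergodic theorem alone gives $\sum_{s\le n}(\bar h_s+\eta)=o(n)$, and $o(n)\cdot\alpha/\sqrt n$ need not be small. This is not enough to control excursions below $1$ under the two-sided projection. I would instead cut time into consecutive windows $I_k=[t_k,t_{k+1})$ of length $L_k\approx(\log t_k)^2$, on which $\alpha_s=\alpha_{t_k}(1+o(1))$.

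For concentration, I use that $\bar h_s$ is approximable, up to error $e^{-cL}$, by a function of the last $O(\log t_k)$ slopes. Combining this with a blocking/Azuma (or martingale-difference) argument and a union bound over windows (the probabilities are summable, so Borel--Cantelli applies) gives, almost surely for all large $k$ and every sub-interval $J\subset I_k$,
\[
\Bigl|\sum_{s\in J}(h_s+\eta)\Bigr|\le C\sqrt{L_k\log t_k}.
\]
Thus the unprojected increments $u_s=-\alpha_s h_s$ satisfy two bounds: every suffix sum within $I_k$ is at least $-C\alpha t_k^{-1/2}\sqrt{L_k\log t_k}=o(1)$, and the full window sum is at least $c\,\alpha_{t_k}L_k$.

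A pathwise projection comparison generalizing Lemma~\ref{lem:projection} then applies. For $y_{s+1}=\proj_{[-1,1]}(y_s+u_s)$, the lower barrier only raises the path, and the upper barrier gives $y_{t'}\ge\min\bigl(1+\min_{s}\sum_{r=s}^{t'-1}u_r,\;y_t+\sum_{r=t}^{t'-1}u_r\bigr)$. With $z_k=x_{t_k}$ this yields
\[
z_{k+1}\ge\min\bigl(1-o(1),\,z_k+c\,\alpha_{t_k}L_k\bigr).
\]
Since $\sum_k\alpha_{t_k}L_k\approx\sum_t\alpha_t=\infty$, this forces $\liminf z_k\ge1$. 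Within-window moves are $O(\alpha_t L_k)=o(1)$ only in sum over a window, so I would instead use the suffix bound directly to conclude that $x_t\to1$ almost surely.

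\textbf{Step 3 (regret).} As in \eqref{rtid}, $R_T=\sum_t g_t(x_t+1)$ whenever $G_T>0$. By the strong law, $G_T/T\to\mathbb E g=\delta/3>0$, so eventually $G_T>0$ and the best comparator is $-1$. Also $\frac1T\sum g_t(x_t-1)\to0$ since $x_t\to1$ and $g_t$ is bounded. Hence $R_T/T=2G_T/T+o(1)\to2\delta/3$ almost surely.
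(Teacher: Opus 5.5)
Your proposal is correct, but it takes a different route from the paper in both of its main steps.

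\textbf{Sign of the drift.} The paper writes $\mu(2)=3\,\mathrm{Cov}(A_{b,t},\phi(A_{q,t}))$. It gets $\le0$ from the Harris correlation inequality. It gets strictness from the law of total covariance: a strictly negative conditional covariance from the current coordinate $X_t$, plus Harris applied to the past coordinates. You instead use that $\bar m_0$ is linear in the slopes. The covariance then splits as $(1-b)\sum_j b^j\,\mathbb E[g_{-j}\phi(\bar v_0)]$, and each term becomes a two-point computation after conditioning on all other coordinates. This avoids Harris entirely and is more elementary.

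\textbf{Convergence $x_t\to1$.} The paper proves a standalone lemma for Bernoulli shifts with two parts. The first is a weighted strong law (ergodic theorem plus Abel summation). The second is that the tail adverse excursion $\Delta_n$ vanishes. This uses one-sided McDiarmid bounds on sub-intervals of dyadic blocks $[N,2N)$, where the positive mean is what makes the exponent of order $\sqrt N$. The paper then shows $x_t$ hits $1$ infinitely often and stays within $\Delta_n$ of $1$ afterwards. You instead use three ingredients:
\begin{itemize}
\item windows of polylogarithmic length on which $\alpha_t$ is essentially constant;
\item a two-sided concentration bound for all sub-interval sums;
\item a general pathwise projection inequality based on the last upper-clipping time.
\end{itemize}
Together these reproduce the period recursion \eqref{eq:period-recursion} of Theorem~\ref{thm:all-betas}, with windows in place of periods. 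Your version stays parallel to the deterministic proof and needs no ergodic theorem. The paper's version packages the probabilistic input into a reusable lemma and requires no tuning of a window size.

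\textbf{Two small cautions.} First, the bound $C\sqrt{L_k\log t_k}$ requires the bounded-differences route (McDiarmid or Azuma on the Doob martingale). Naive blocking into $K\asymp\log t_k$ interleaved independent subsequences only gives $C\sqrt{KL_k\log t_k}\asymp(\log t_k)^2$. That is the same order as the drift $\eta L_k$ when $L_k=(\log t_k)^2$, so blocking would need longer windows, for example $L_k=(\log t_k)^3$. Second, since $\alpha_{t_k}L_k\to0$, the crude bound $|x_t-x_{t_k}|\le \frac{3}{\sqrt{1-q}}\sum_{s\in I_k}\alpha_s$ already transfers $z_k\to1$ to $x_t\to1$; the suffix bound is not needed at that point.
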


\begin{proof}
We first analyze the balanced value \(a=2\).  It is convenient to work with
a two-sided i.i.d. extension \((X_t)_{t\in\Z}\).  For
\(\lambda\in[0,1)\), define the stationary weighted average
\begin{equation}\label{atdef}
    A_{\lambda,t}
    \colonequals(1-\lambda)\sum_{\ell=0}^{\infty}\lambda^\ell X_{t-\ell}.
\end{equation}
Then
$
    \mathbb E[A_{\lambda,t}]=\frac13.
$
For \(a=2\), we have, for all $t\in\Z$
\[
    g_t(2)=-1+3X_t,
    \qquad
    g_t(2)^2=1+3X_t.
\]
The stationary versions of the first and second Adam moments are therefore
\[
    m_t^*(2)
    =
    (1-b)\sum_{\ell=0}^{\infty}b^\ell g_{t-\ell}(2)
    =
    -1+3A_{b,t},
    \qquad
    v_t^*(2)
    =
    (1-q)\sum_{\ell=0}^{\infty}q^\ell g_{t-\ell}(2)^2
    =
    1+3A_{q,t}.
\]
Define the stationary normalized Adam direction
\[
    h_t^*(2)
    \colonequals
    \frac{m_t^*(2)}
    {\sqrt{v_t^*(2)}+\varepsilon}
    =
    \frac{-1+3A_{b,t}}
    {\sqrt{1+3A_{q,t}}+\varepsilon}.
\]

\textbf{Step 1. Proving a positive drift.}  In the proof of Theorem \ref{thm:all-betas}, we used \eqref{eq:delta-choice} to show that $S(2)<0$ and $S(a)<0$ for $a$ near $2$.  In the current proof, the analogous statement is that $h_t^*(2)$ has negative mean.  That is, we claim that the following expression does not depend on $t\in\Z$ and
\begin{equation}\label{mu2ineq}
    \mu(2)\colonequals\mathbb E[h_t^*(2)]<0.
\end{equation}
Let
$
    \phi(y)\colonequals\frac{1}{\sqrt{1+3y}+\varepsilon},
$
$\forall$ $y\geq0$.
Then \(\phi\) is strictly decreasing on \([0,1]\).  Since
\(\mathbb E[A_{\lambda,t}]=1/3\),
\[
    \mu(2)
    =
    \mathbb E\left[(-1+3A_{b,t})\phi(A_{q,t})\right]  
=  3\,\mathrm{Cov}\!\left(A_{b,t},\phi(A_{q,t})\right).
\]
The random variable \(A_{b,t}\) is an increasing function of the coordinates
\((X_t,X_{t-1},X_{t-2},\ldots)\).  The random variable \(A_{q,t}\) is also
an increasing function of these same coordinates, and therefore
\(\phi(A_{q,t})\) is a decreasing function of them.  By the Harris
correlation inequality for product measures,
\[
    \mathrm{Cov}\!\left(A_{b,t},\phi(A_{q,t})\right)\le0.
\]
The inequality is in fact strict.  For any $t\geq1$ define the $\sigma$-algebra
\[
\mathcal G_t
\colonequals\sigma(X_{t-1},X_{t-2},\ldots)
\]
and define $B_t,Q_t$ so that
\[
A_{b,t}=B_t+(1-b)X_t,
\qquad
A_{q,t}=Q_t+(1-q)X_t,
\]
and such that \(B_t\) and \(Q_t\) are \(\mathcal G_t\)-measurable.  Set
\(p\colonequals\mathbb P(X_t=1)=1/3\).  Conditional on \(\mathcal G_t\),
\[
\mathbb E[A_{b,t}\mid\mathcal G_t]
   =B_t+p(1-b),
\qquad
\mathbb E[\phi(A_{q,t})\mid\mathcal G_t]
   =(1-p)\phi(Q_t)+p\phi(Q_t+1-q),
\]
\[
\operatorname{Cov}
 \bigl(A_{b,t},\phi(A_{q,t})\mid\mathcal G_t\bigr)
 =
 p(1-p)(1-b)
 \bigl[\phi(Q_t+1-q)-\phi(Q_t)\bigr]  
 <0,
\]
since \(b,q<1\) and \(\phi\) is strictly decreasing. Moreover,
$
\mathbb E[A_{b,t}\mid\mathcal G_t]
$
is an increasing function of the coordinates
\((X_{t-1},X_{t-2},\ldots)\), whereas
$
\mathbb E[\phi(A_{q,t})\mid\mathcal G_t]
$
is a decreasing function of those coordinates.  The Harris
correlation inequality, now applied to the product measure of the
past coordinates, therefore gives
\[
\operatorname{Cov}\!\left(
 \mathbb E[A_{b,t}\mid\mathcal G_t],
 \mathbb E[\phi(A_{q,t})\mid\mathcal G_t]
 \right)\le 0.
\]
The law of total covariance consequently yields
$
\operatorname{Cov}(A_{b,t},\phi(A_{q,t}))<0.
$
Thus
\[
\mu(2)=3\operatorname{Cov}(A_{b,t},\phi(A_{q,t}))<0.
\]

Now consider general \(a\) near \(2\).  Since for all $t\in\Z$
\[
    g_t(a)=-1+(a+1)X_t,
\qquad
    g_t(a)^2=1+(a^2-1)X_t,
\]
the stationary moments are
\begin{equation}\label{stdef}
    m_t^*(a)
    =
    -1+(a+1)A_{b,t},
\qquad
    v_t^*(a)
    =
    1+(a^2-1)A_{q,t}.
\end{equation}
Define
\[
    h_t^*(a)
    \colonequals
    \frac{m_t^*(a)}
    {\sqrt{v_t^*(a)}+\varepsilon},
\qquad
    \mu(a)\colonequals\mathbb E[h_t^*(a)].
\]
The denominator is bounded away from zero and the integrand is bounded and
continuous in \(a\) in a neighborhood of \(2\).  Hence, by dominated
convergence, \(a\mapsto\mu(a)\) is continuous.  Since \(\mu(2)<0\), there
exists \(\delta>0\) such that, with
$
    a=2+\delta,
$
we still have
\begin{equation}\label{muan}
    \mu(a)<0.
\end{equation}
Decreasing \(\delta\) if necessary, assume also \(\delta\le1\).  

Set
$
    \gamma\colonequals-\mu(a)>0.
$
Thus the stationary Adam direction has negative mean:
$
    \mathbb E[h_t^*(a)]=-\gamma.
$
Equivalently, the mean update direction
$
    -h_t^*(a)
$
is positive.

We next transfer this stationary drift to the actual Adam process initialized
at \(m_0=v_0=0\).  The actual moments $m_t$ satisfy
\[
    m_t=(1-b)\sum_{\ell=0}^{t-1}b^\ell g_{t-\ell}(a),\qquad\forall\,t\geq1,
\]
while the stationary version is
\[
    m_t^*(a)=(1-b)\sum_{\ell=0}^{\infty}b^\ell g_{t-\ell}(a),\qquad\forall\,t\in\Z.
\]
Since the gradients $g_t$ satisfy $|g_t|\leq3$ for all $t$,
\[
    |m_t-m_t^*(a)|\le 3b^t,
\qquad
    |v_t-v_t^*(a)|\le 9q^t.
\]
Also \(v_t^*(a)\ge1\) for all $t\in\Z$ and, for the actual process,
\[
    v_t\stackrel{\eqref{adameq}}{=}
    qv_{t-1}+(1-q)g_t^2\stackrel{\eqref{adameq}}{\ge} 1-q>0,\qquad\forall\,t\geq1.
\]
Therefore
$
    h_t\colonequals\frac{m_t}{\sqrt{v_t}+\varepsilon}
$
satisfies
\begin{equation}\label{htineq}
    |h_t-h_t^*(a)|\le C\rho^t,\qquad\forall\,t\geq1,
\end{equation}
for some \(C<\infty\) and \(\rho\in(0,1)\).

\textbf{Step 2.}  We now prove the following tail excursion estimate.  This technical Lemma has no determinstic analogue, i.e. it was not needed in the proof of Theorem \ref{thm:all-betas}.

\begin{lemma}[Weighted positive-drift estimate]\label{lem:weighted-positive-drift}
Let $(X_t)_{t\in\mathbb Z}$ be i.i.d. random variables, and let
$(Z_t)_{t\in\mathbb Z}$ be a bounded stationary Bernoulli shift of the
form
\[
    Z_t\colonequals F(X_t,X_{t-1},X_{t-2},\ldots),\qquad\forall\,t\in\Z,
\]
where $F$ is real-valued.  Assume that there are constants $C_0<\infty$ and $\rho\in(0,1)$ such that,
for every $\ell\ge0$, changing only the coordinate $X_{t-\ell}$ can change
$Z_t$ by at most $C_0\rho^\ell$.  Suppose
\begin{equation}\label{zbardef}
    \bar z\colonequals\mathbb E Z_t>0.
\end{equation}
Let $\alpha>0$.  Let $\alpha_t\colonequals\alpha/\sqrt t$ for all $t\geq1$.  Then as $T\to\infty$,
\[
    \sum_{t=1}^T \alpha_t Z_t\to +\infty
    \qquad\text{almost surely},
\]
and the uniform tail adverse excursion satisfies: as $n\to\infty$,
\begin{equation}\label{dnineq}
    \Delta_n
    \colonequals
    \sup_{n\le r\le s<\infty}
    \max\Big(-\sum_{t=r}^s\alpha_tZ_t,\,0\Big)
    \longrightarrow 0
    \qquad\text{almost surely}.
\end{equation}
\end{lemma}

\begin{proof}
The weighted strong law follows from the  ergodic theorem by Abel
summation.  Let
\[
    S_n\colonequals\sum_{t=1}^n Z_t,
    \qquad
    c_n\colonequals\frac{S_n}{n}.
\]
By the ergodic theorem, $c_n\to\bar z$ almost surely as $n\to\infty$.  Write
$w_t=t^{-1/2}$ for all $t\geq1$ and $W_T=\sum_{t=1}^T w_t$ for all $T\geq1$.  Summation by parts gives
\[
    \sum_{t=1}^T w_t Z_t
    =
    w_T S_T+
    \sum_{t=1}^{T-1}(w_t-w_{t+1})S_t.
\]
Since $S_t=tc_t$,
\[
    \frac{\sum_{t=1}^T w_t Z_t}{W_T}
    =
    \frac{Tw_T}{W_T}c_T
    +
    \sum_{t=1}^{T-1}
    \frac{t(w_t-w_{t+1})}{W_T}c_t.
\]
%
The coefficients on the right are nonnegative and sum to one since
\[
Tw_T
    +\sum_{t=1}^{T-1}t(w_t - w_{t+1})
=\sum_{t=1}^{T}w_t = W_T.
\]
So, the Toeplitz lemma implies that, as $T\to\infty$,
\begin{equation}\label{wsl}
    \frac{\sum_{t=1}^T w_t Z_t}{\sum_{t=1}^T w_t}	\to \bar z
    \qquad\text{almost surely}.
\end{equation}
Since $\sum_{t=1}^T\alpha_t\sim2\alpha\sqrt T$, this implies as $T\to\infty$
\begin{equation}\label{atinf}
    \sum_{t=1}^T\alpha_t Z_t\to+
    \infty
    \qquad\text{almost surely}.
\end{equation}

It remains to prove \eqref{dnineq}.
Fix \(\eta>0\), and for \(N\ge 1\) let
\[
I_N\colonequals\{N,N+1,\ldots,2N-1\}.
\]
For an interval \([r,s]\subseteq I_N\), set
\begin{equation}\label{mrsdef}
m\colonequals s-r+1,
\qquad
Y_{r,s}\colonequals \sum_{t=r}^s \alpha_t Z_t.
\end{equation}

We will apply McDiarmid's inequality to \(Y_{r,s}\), regarded
as a function of the independent coordinates \((X_j)_{j\le s}\).
By assumption, changing only \(X_j\) can change \(Y_{r,s}\) by at most
\[
d_j\colonequals
C_0\sum_{t=\max\{r,j\}}^s
       \alpha_t\rho^{\,t-j},
\qquad j\le s,
\]
where an empty sum is interpreted as zero.  Since \(\alpha_t\le \alpha/\sqrt N\) for \(t\in I_N\), we have
\[
d_j\le
\frac{C_0\alpha}{(1-\rho)\sqrt N},\qquad\forall\,r\leq j\leq s,
\]
\[
d_j\le
\frac{C_0\alpha}{(1-\rho)\sqrt N}\rho^{\,r-j},\qquad\forall\,j<r.
\]
It follows that
\[
\sum_{j=-\infty}^s d_j^2
 \le
 \frac{C_1(m+1)}{N}
\]
for a constant \(C_1<\infty\) depending only on
\(C_0,\rho,\alpha\).

McDiarmid's inequality therefore gives
\[
\mathbb P\bigl(Y_{r,s}-\mathbb EY_{r,s}\le-u\bigr)
 \le
 \exp\left(
  -\frac{2u^2}{\sum_{j=-\infty}^s d_j^2}
 \right),\qquad\forall\,u>0.
\]
One may justify its use for the countable family
\((X_j)_{j\le s}\) by first fixing all coordinates before a finite
time, applying the finite-dimensional inequality, and then sending
that time to \(-\infty\).  The assumed summable coordinate
sensitivities make the corresponding approximation uniform.  Since \(t\le 2N\) on \(I_N\) and $[r,s]\subseteq I_N$,
\[
\mathbb EY_{r,s}
\stackrel{\eqref{mrsdef}\wedge\eqref{zbardef}}{=}\overline z\sum_{t=r}^s\alpha_t
 \stackrel{\eqref{mrsdef}}{\geq}
 \frac{\alpha\overline z}{\sqrt{2N}}\,m
 \equalscolon c_0\frac{m}{\sqrt N}\stackrel{\eqref{zbardef}}{>}0.
\]
Consequently, using $-\eta\colonequals-u+\mathbb{E}Y_{r,s}$, for $N$ sufficiently large,
\[
\mathbb P(Y_{r,s}\le-\eta)
\le
\exp\Big(
 -c_1
 N\big(\eta+c_0m/\sqrt N\big)^2/(m+1)
\Big)                                    
\le
\exp(-c_\eta\sqrt N).
\]
For the last inequality, use that $\eta>0$ for $N$ sufficiently large by definition of $\eta$,
\[
\big(\eta+c_0m/\sqrt N\big)^2
 \ge 2\eta c_0m/\sqrt N,
\]
and \(m/(m+1)\ge 1/2\) by \eqref{mrsdef}.  Taking a union bound over the at most \(N^2\) intervals with integer endpoints in \(I_N\)
gives, for all $N$ sufficiently large
\[
\P\Big(
 \exists\, r,s\in I_N,\ r\le s\colon
 \sum_{t=r}^s\alpha_tZ_t\le-\eta
\Big)
\le
N^2\exp(-c_\eta\sqrt N).
\]
This bound is summable along the dyadic sequence \(N=2^j\).
Therefore, by the Borel--Cantelli lemma, almost surely there exists
\(j_0(\eta)\) such that, for every \(j\ge j_0(\eta)\), every
subinterval of the dyadic block
\[
B_j\colonequals\{2^j,\ldots,2^{j+1}-1\}
\]
has weighted sum greater than \(-\eta\).

We also claim that every sufficiently late dyadic block has
positive weighted sum.  Define
\[
A_T\colonequals\sum_{t=1}^T\alpha_tZ_t.
\]
The weighted strong law \eqref{wsl} proven above gives
\[
A_T
 =\overline z\sum_{t=1}^T\alpha_t+o(\sqrt T)
 \qquad\text{almost surely}.
\]
Hence
\[
A_{2^{j+1}-1} - A_{2^{j}-1}
=\sum_{t=2^j}^{2^{j+1}-1}\alpha_tZ_t
 =
 \overline z\sum_{t=2^j}^{2^{j+1}-1}\alpha_t
 +o(2^{j/2})
 >0
\]
for all sufficiently large \(j\), since
$
\sum_{t=2^j}^{2^{j+1}-1}\alpha_t\asymp 2^{j/2}.
$

Now consider an interval \([r,s]\) lying sufficiently far in the
tail.  If it is contained in one dyadic block, its weighted sum is
greater than \(-\eta\).  Otherwise, decompose it into a terminal
piece of its first dyadic block, a collection of complete dyadic
blocks, and an initial piece of its final dyadic block.  The complete dyadic
blocks have nonnegative sum, and each of the two boundary pieces has
sum greater than \(-\eta\) (almost surely, for sufficiently large $j$).  Thus
\[
\sum_{t=r}^s\alpha_tZ_t>-2\eta
\]
for every sufficiently late interval \([r,s]\).

Finally, apply the preceding argument simultaneously to the
countable sequence \(\eta=1/k\), \(k\ge1\).  On the resulting
probability-one event, for every \(k\) we have
\[
\Delta_n\le \frac{2}{k}
\]
for all sufficiently large \(n\).  Therefore as $n\to\infty$,
\[
\Delta_n\longrightarrow0
\qquad\text{almost surely}.
\]
\end{proof}

\textbf{Step 3.  Applying Lemma~\ref{lem:weighted-positive-drift}}.  Returning to the proof of Theorem \ref{iidthm}, we will apply Lemma~\ref{lem:weighted-positive-drift}, so we verify its coordinate-sensitivity assumption.  Let
\begin{equation}\label{ztdef}
Z_t^*\colonequals-h_t^*(a)
      =-\frac{m_t^*(a)}{\sqrt{v_t^*(a)}+\varepsilon}.
\end{equation}
From \eqref{stdef} and \eqref{atdef}, changing only \(X_{t-\ell}\) changes the stationary moment $m_t^*(a)$ by at most
\[
\left|\Delta m_t^*(a)\right|
 =(a+1)(1-b)b^\ell
\]
and similarly $v_t^*(a)$ changes by at most
\[
\left|\Delta v_t^*(a)\right|
 =(a^2-1)(1-q)q^\ell.
\]
On the region
$\{(m,v)\in\R^2\colon
|m|\le a,\, 1\le v\le a^2\},
$
the function
$
F(m,v)\colonequals-\frac{m}{\sqrt v+\varepsilon}
$
satisfies
\[
\left|\frac{\partial F}{\partial m}\right|
 \le 1,
\qquad
\left|\frac{\partial F}{\partial v}\right|
 =
 \frac{|m|}
 {2\sqrt v(\sqrt v+\varepsilon)^2}
 \le \frac a2.
\]
It then follows from \eqref{ztdef} that the change in $Z_t^*$ from changing $X_{t-\ell}$ is at most
\[
\begin{aligned}
|\Delta Z_t^*|
&\le
 (a+1)(1-b)b^\ell
 +\frac a2(a^2-1)(1-q)q^\ell.
\end{aligned}
\]
Choose any
$
\rho_0\in\bigl(\max(b,q),1\bigr).
$
Then there is a constant \(C_0<\infty\) such that
\[
|\Delta Z_t^*|\le C_0\rho_0^\ell,
\qquad \ell\ge0.
\]
Moreover, \(Z_t^*\) is bounded, since for all $2\leq a\leq3$
\[
|m_t^*(a)|\le a,\qquad v_t^*(a)\ge1.
\]
Thus \(Z_t^*\) satisfies all the hypotheses of Lemma~\ref{lem:weighted-positive-drift}.

We apply Lemma~\ref{lem:weighted-positive-drift} first to the stationary
process $Z_t^*$ from \eqref{ztdef}.  It has positive mean by \eqref{muan}.  The actual process $Z_t\colonequals-h_t$ differs from $Z_t^*$ by an exponentially
decaying error by \eqref{htineq}:
$
    |Z_t-Z_t^*|\le C\rho^t.
$
Consequently,
\[
    \sum_{t=1}^\infty \alpha_t |Z_t-Z_t^*|<\infty,
    \qquad
    \lim_{n\to\infty}\sup_{n\le r\le s<\infty}
    \sum_{t=r}^s\alpha_t|Z_t-Z_t^*|	=0.
\]
Thus the two conclusions of Lemma~\ref{lem:weighted-positive-drift} also
hold for $Z_t=-h_t$.

\textbf{Step 4.  Proving $\lim_{t\to\infty}x_t =1$}.  We now prove that $\lim_{t\to\infty}x_t =1$.  The update is
\[
x_{t+1}
 =\Pi_{[-1,1]}(x_t+\alpha_tZ_t),\qquad\forall\,t\geq1.
\]

Suppose there exists $m$ such that $x_t<1$ for all $t>m$.  Then for any $n>m$ we have
$$1>x_{n+1}\geq x_{m}+\sum_{t=m}^{n}\alpha_t Z_t,$$
since projection on $[-1,1]$ can only increase $x_{t}<1$ when $t>m$.  But this contradicts \eqref{atinf}.  We therefore conclude that $x_t=1$ for infinitely many times $t$.

Fix \(n\), and let \(\tau\ge n\) be a time such that \(x_\tau=1\).  For all \(m\ge\tau\), let
\[
r_m\colonequals\max\{r\in[\tau,m]\colon x_r=1\}.
\]
If \(r_m=m\), then \(1-x_m=0\).  Otherwise, none of the iterates
\[
x_{r_m+1},\ldots,x_m
\]
equals \(1\).  Hence projection to $1$ does not occur for the indices from \(r_m\) through \(m-1\).  Projection to $-1$ can only increase the
iterate, so an induction gives
\[
x_m
 \ge
 1+\sum_{t=r_m}^{m-1}\alpha_tZ_t.
\]
Since $x_m\leq1$, we have $-\sum_{t=r_m}^{m-1}\alpha_tZ_t\geq 0$, so for all $m\geq\tau\geq n$,
\[
1-x_m
 \le
 -\sum_{t=r_m}^{m-1}\alpha_tZ_t
 =
 \max\Big(
  -\sum_{t=r_m}^{m-1}\alpha_tZ_t
 ,\,0\Big)
 \stackrel{\eqref{dnineq}}{\le} \Delta_n.
\]
Since \(\Delta_n\to0\) almost surely by \eqref{dnineq}, for any \(\eta>0\) we may first
choose \(n\) so large that \(\Delta_n<\eta\), and then choose a
hitting time \(\tau\ge n\).  The preceding bound gives
\[
0\le1-x_m<\eta,
\qquad\forall\, m\ge\tau.
\]
Therefore
\[
\lim_{t\to\infty}x_t =1
\qquad\text{almost surely}.
\]

\textbf{Step 5.  Regret bound}.  It remains to compute regret.  By the strong law of large numbers and since $a=2+\delta$,
\[
    \lim_{T\to\infty}
    \frac1T\sum_{t=1}^T g_t(a)
    =\mathbb E[g_t(a)]
    =
    \frac{a}{3}-\frac23
    =
    \frac{\delta}{3}
    \qquad\text{almost surely}.
\]
Let
$
    G_T\colonequals\sum_{t=1}^T g_t(a).
$
Then $G_T/T\to\delta/3>0$, and hence $G_T>0$ for all sufficiently large
$T$, almost surely.  Therefore the best fixed comparator is eventually
$x_T^*=-1$, and for all sufficiently large $T$,
$
    R_T
    =
    \sum_{t=1}^T g_t(a)x_t+G_T.
$
Equivalently,
\[
    R_T
    =
    2G_T+\sum_{t=1}^T g_t(a)(x_t-1).
\]
Since $x_t\to1$ almost surely and the gradients $g_t$ are uniformly bounded,
\[
    \lim_{T\to\infty}\frac1T\sum_{t=1}^T g_t(a)(x_t-1)=0
    \qquad\text{almost surely}.
\]
Combining this with $\lim_{T\to\infty}G_T/T=\delta/3$, we obtain
\[
    \lim_{T\to\infty}\frac{R_T}{T}=\frac{2\delta}{3}
    \qquad\text{almost surely}.
\]

\end{proof}

\noindent\textbf{Acknowledgement.}  ChatGPT 5.5 assisted in the preparation of this manuscript.

\bibliographystyle{amsalpha}

\begin{thebibliography}{99}

\bibitem[AMM+20]{ala20}
Ahmet Alacaoglu, Yura Malitsky, Panayotis Mertikopoulos, and Volkan Cevher.
\emph{A new regret analysis for Adam-type algorithms}.
International Conference on Machine Learning (2020), 119, pp. 202--210.

\bibitem[AZK+24]{ahn2024adamftrl}
Kwangjun Ahn, Zhiyu Zhang, Yunbum Kook, and Yan Dai.
\newblock \emph{Understanding Adam optimizer via online learning of updates: Adam is FTRL in disguise}.
\newblock International Conference on Machine Learning (2024).

\bibitem[BG20]{dasilva2020ode}
Andr\'e Belotto da Silva and Maxime Gazeau.
\newblock \emph{A general system of differential equations to model first order adaptive algorithms}.
\newblock Journal of Machine Learning Research (2020), 21 (129), pp. 1--42.


\bibitem[BMR+20]{brown20}
Tom B. Brown, Benjamin Mann, Nick Ryder, Melanie Subbiah, Jared Kaplan, Prafulla Dhariwal, Arvind Neelakantan, Pranav Shyam, Girish Sastry, Amanda Askell, Sandhini Agarwal, Ariel Herbert-Voss, Gretchen Krueger, Tom Henighan, Rewon Child, Aditya Ramesh, Daniel M. Ziegler, Jeffrey Wu, Clemens Winter, Christopher Hesse, Mark Chen, Eric Sigler, Mateusz Litwin, Scott Gray, Benjamin Chess, Jack Clark, Christopher Berner, Sam McCandlish, Alec Radford, Ilya Sutskever, and Dario Amodei.
\emph{Language models are few-shot learners}.
Advances in Neural Information Processing Systems
(2020) 159, pp. 1877--1901.


\bibitem[BW19]{bockweiss2022adam}
Sebastian Bock and Martin Georg Wei{\ss}.
\newblock \emph{Non-convergence and limit cycles in the Adam optimizer}.
International Conference on Artificial Neural Networks (2019), vol 11728.


\bibitem[BZZ+26]{bai2026degenerate}
Zhiwei Bai, Jiajie Zhao, Zhangchen Zhou, Zhi-Qin John Xu, and Yaoyu Zhang.
\newblock \emph{Towards understanding Adam convergence on highly degenerate polynomials}.
International Conference on Machine Learning (2026), to appear.


\bibitem[D24]{deep25}
DeepSeek-AI.
\emph{DeepSeek-V3 Technical Report}.  (2024), Preprint, \href{https://arxiv.org/abs/2412.19437v1}{arXiv:2412.19437}.


\bibitem[DCK+19]{devlin19}
Jacob Devlin, Ming-Wei Chang, Kenton Lee, and Kristina Toutanova.
\emph{BERT: Pre-training of Deep Bidirectional Transformers for Language Understanding}. North American Chapter of the Association for Computational Linguistics (2019), pp. 4171--4186.


\bibitem[DDJ+25]{dereich2025symmetry}
Steffen Dereich, Thang Do, Arnulf Jentzen, and Philippe von Wurstemberger.
\newblock \emph{Adam symmetry theorem: characterization of the convergence of the stochastic Adam optimizer}.
(2025), Preprint, \href{https://arxiv.org/abs/2511.06675}{arXiv:2511.06675}.

\bibitem[DGJ24]{dereich2024nonvanishing}
Steffen Dereich, Robin Graeber, and Arnulf Jentzen.
\newblock \emph{Non-convergence of Adam and other adaptive stochastic gradient descent optimization methods for non-vanishing learning rates}.
(2024), Preprint, \href{https://arxiv.org/abs/2407.08100}{arXiv:2407.08100}.


\bibitem[DHJ24]{dohannibaljentzen2024relu}
Thang Do, Sonja Hannibal, and Arnulf Jentzen.
\newblock \emph{Non-convergence to global minimizers in data driven supervised deep learning: Adam and stochastic gradient descent optimization provably fail to converge to global minimizers in the training of deep neural networks with ReLU activation}.
Journal of Mathematical Analysis and Applications (2026), 130724.

\bibitem[DJR25]{dojentzenriekert2025risk}
Thang Do, Arnulf Jentzen, and Adrian Riekert.
\newblock \emph{Non-convergence to the optimal risk for Adam and stochastic gradient descent optimization in the training of deep neural networks}.
(2025), Preprint, \href{https://arxiv.org/abs/2503.01660}{arXiv:2503.01660}.

\bibitem[HWD19]{huang2019}
Haiwen Huang, Chang Wang, and Bin Dong.  \emph{Nostalgic Adam: Weighting More of the Past Gradients When Designing the Adaptive Learning Rate}. International Joint Conference on Artificial Intelligence (2019), pp. 2556--2562.


\bibitem[JR25]{jentzenriekert2025global}
Arnulf Jentzen and Adrian Riekert.
\newblock \emph{Non-convergence to global minimizers for Adam and stochastic gradient descent optimization and constructions of local minimizers in the training of artificial neural networks}.
\newblock SIAM/ASA Journal on Uncertainty Quantification (2025), 13 (3), pp. 1294--1333.

\bibitem[KB15]{kingma2015adam}
Diederik P. Kingma and Jimmy Ba.
\newblock \emph{Adam: A method for stochastic optimization}.
\newblock International Conference on Learning Representations (2015). (Poster)


\bibitem[LH19]{los19}
Ilya Loshchilov and Frank Hutter.
\emph{Decoupled Weight Decay Regularization}. International Conference on Learning Representations (2019) (Poster).

\bibitem[LXL+19]{luo2019adabound}
Liangchen Luo, Yuanhao Xiong, Yan Liu, and Xu Sun.
\newblock \emph{Adaptive gradient methods with dynamic bound of learning rate}.
\newblock International Conference on Learning Representations (2019).


\bibitem[PKC+26]{par26}
Tetiana Parshakova, Ahmed Khaled, Michael Crawshaw, Guillaume Garrigos, and Robert M. Gower.
\emph{Muon Does Not Converge on Convex Lipschitz Functions}. (2026), 
Preprint, \href{https://arxiv.org/abs/2605.08980}{arXiv:2605.08980}.


\bibitem[RKK18]{reddi2018adam}
Sashank J. Reddi, Satyen Kale, and Sanjiv Kumar.
\newblock \emph{On the convergence of Adam and beyond}.
\newblock International Conference on Learning Representations (2018).


\bibitem[Toi23]{toint2023adam}
Philippe L. Toint.
\newblock \emph{Divergence of the ADAM algorithm with fixed-stepsize: a (very) simple example}.
(2023), Preprint, \href{https://arxiv.org/abs/2308.00720}{arXiv:2308.00720}.

\bibitem[TMS+23]{touvron23}
	Hugo Touvron, Louis Martin, Kevin Stone, Peter Albert, Amjad Almahairi, Yasmine Babaei, Nikolay Bashlykov, Soumya Batra, Prajjwal Bhargava, Shruti Bhosale, Dan Bikel, Lukas Blecher, Cristian Canton-Ferrer, Moya Chen, Guillem Cucurull, David Esiobu, Jude Fernandes, Jeremy Fu, Wenyin Fu, Brian Fuller, Cynthia Gao, Vedanuj Goswami, Naman Goyal, Anthony Hartshorn, Saghar Hosseini, Rui Hou, Hakan Inan, Marcin Kardas, Viktor Kerkez, Madian Khabsa, Isabel Kloumann, Artem Korenev, Punit Singh Koura, Marie-Anne Lachaux, Thibaut Lavril, Jenya Lee, Diana Liskovich, Yinghai Lu, Yuning Mao, Xavier Martinet, Todor Mihaylov, Pushkar Mishra, Igor Molybog, Yixin Nie, Andrew Poulton, Jeremy Reizenstein, Rashi Rungta, Kalyan Saladi, Alan Schelten, Ruan Silva, Eric Michael Smith, Ranjan Subramanian, Xiaoqing Ellen Tan, Binh Tang, Ross Taylor, Adina Williams, Jian Xiang Kuan, Puxin Xu, Zheng Yan, Iliyan Zarov, Yuchen Zhang, Angela Fan, Melanie Kambadur, Sharan Narang, Aurélien Rodriguez, Robert Stojnic, Sergey Edunov, and Thomas Scialom.
\emph{Llama 2: Open Foundation and Fine-Tuned Chat Models}. (2023), Preprint, \href{https://arxiv.org/abs/2307.09288}{arXiv:2307.09288}.


\bibitem[WK22]{wang2022vradam}
Ruiqi Wang and Diego Klabjan.
\newblock \emph{Divergence results and convergence of a variance reduced version of Adam}.
(2022), Preprint, \href{https://arxiv.org/abs/2210.05607}{arXiv:2210.05607}.

\bibitem[XZL+24]{xie24}
Xingyu Xie, Pan Zhou, Huan Li, Zhouchen Lin, and Shuicheng Yan.
\emph{Adan: Adaptive Nesterov Momentum Algorithm for Faster Optimizing Deep Models}. IEEE Transactions on Pattern Analysis and Machine Intelligence (2024), 46 (12), pp. 9508--9520.


\bibitem[ZCS+22]{zhang2022adam}
Yushun Zhang, Congliang Chen, Naichen Shi, Ruoyu Sun, and Zhi-Quan Luo.
\newblock \emph{Adam can converge without any modification on update rules}.
\newblock Advances in Neural Information Processing Systems (2022).


\bibitem[ZLC+26]{zhang2026adam}
Yushun Zhang, Bingran Li, Congliang Chen, Zhi-Quan Luo, and Ruoyu Sun.
\newblock \emph{Adam converges without any modification on update rules}.
(2026), Preprint, \href{https://arxiv.org/abs/2603.02092}{arXiv:2603.02092}.


\bibitem[ZRS+18]{zaheer2018yogi}
Manzil Zaheer, Sashank J. Reddi, Devendra Sachan, Satyen Kale, and Sanjiv Kumar.
\newblock \emph{Adaptive methods for nonconvex optimization}.
\newblock Advances in Neural Information Processing Systems (2018).

\end{thebibliography}

\end{document}